\providecommand{\se}[1]{}
\providecommand{\vk}[1]{}
\providecommand{\en}[1]{}
\providecommand{\substi}[1]{{\textcolor{blue}{#1}}}
\newtheorem{proposition}{Proposition}
\newtheorem{lemma}{Lemma}
\newtheorem{assumption}{Assumption}
\begin{document}

%
%
%

\title{Quantifying and Understanding Adversarial Examples in Discrete Input Spaces}

\author{%
  Volodymyr Kuleshov\textsuperscript{1}, Evgenii Nikishin\textsuperscript{2}\\
  Shantanu Thakoor\textsuperscript{3}, Tingfung Lau\textsuperscript{4}, Stefano Ermon\textsuperscript{5} \\
  \texttt{kuleshov@cornell.edu, nikishin.evg@gmail.com,}\\
  \texttt{shanu.thakoor@gmail.com,ldf921@126.com,ermon@cs.stanford.edu} \\  
  \textsuperscript{1}Cornell Tech, New York, NY\\
  \textsuperscript{2}University of Montreal, Montreal, QC\\
  \textsuperscript{3}Deepmind, London, UK\\
  \textsuperscript{4}Carnegie-Mellon University, Pittsburg, PA\\
  \textsuperscript{5}Stanford University, Stanford, CA\\
}

\date{}
\maketitle

\begin{abstract}
Modern classification algorithms are susceptible to adversarial examples---perturbations to inputs that cause the algorithm to produce undesirable behavior.  In this work, we seek to understand and extend adversarial examples across domains in which inputs are discrete, particularly across new domains, such as computational biology. As a step towards this goal, we formalize a notion of synonymous adversarial examples that applies in any discrete setting and describe a simple domain-agnostic algorithm to construct such examples. We apply this algorithm across multiple domains---including sentiment analysis and DNA sequence classification---and find that it consistently uncovers adversarial examples. 
We seek to understand their prevalence theoretically and we attribute their existence to spurious token correlations, a statistical phenomenon that is specific to discrete spaces. Our work is a step towards a domain-agnostic treatment of discrete adversarial examples analogous to that of continuous inputs.
\end{abstract}

\section{Introduction}
Modern machine learning algorithms are susceptible to adversarial examples---maliciously crafted inputs that fool the algorithm into producing undesirable behavior.
Adversarial examples arise in image classification \citep{szegedy2014intriguing}, speech recognition \citep{carlini2016hidden}, reinforcement learning \citep{behzadan2017vulnerability} and in other domains.
%
In many applications of machine learning---including natural language processing \citep{pang2008opinion,bahdanau2014neural}, genomics \citep{genomicsSurvey}, network science \citep{graphsSurvey}, and others---the inputs to a model are discrete, 
motivating several independent lines of work in these domains \citep{alzantot2018generating,morris2020textattack,zugner2018adversarial,wang2019attacking}.
%
Most of this work is domain-specific, involving definitions and analyses hand-crafted for each domain. 

The goal of this paper is to extend adversarial examples to new discrete domains, particularly to computational biology. To achieve this goal, we create a domain-agnostic framework that  features a general definition and an algorithm for finding adversarial examples. 
We demonstrate the utility and the generality of this approach by using it to identify examples in new domains (including gene and protein classification), as well as by obtaining strong performance in a well-studied domain, text classification.
We also study this approach theoretically, and find a new domain-agnostic cause for the existence of discrete adversarial examples.


Specifically, we define a notion of synonymous adversarial examples that applies in any discrete setting and that is constructed by perturbing inputs within a natural and domain-specific equivalence class of inputs such that they are misclassified.
Equivalence classes are defined via a user-specified similarity function $d(x_1,x_2)$, which is small when inputs $x_1,x_2$ are synonymous. Such similarity functions and their equivalence classes arise naturally in many domains. 
For example, if $x_1, x_2$ are graphs, we may say that $d(x_1, x_2) = 0$ if $x_1, x_2$ are isomorphic to each other. If $x_1$ and $x_2$ are words, we may gauge the extent to which they are synonyms.

Next, we describe a simple greedy algorithm that can construct such examples,
and apply it in a wide range of domains, including 
new problems in computational biology,
such as DNA sequence classification. 
Typical synonymous adversarial examples are created by replacing 10-30\% of symbols in a discrete sequence, and affect both deep neural networks as well as shallow linear models. They are also highly prevalent: up to 90\% of discrete inputs admit adversarial perturbations; furthermore, these perturbations retain a degree of transferability across models.

We seek to understand the prevalence of adversarial examples theoretically, and we attribute
their existence to spurious token correlations, a statistical phenomenon that is specific to discrete spaces and that is caused by an imbalance between the dimensionality of a problem (e.g., the size of a natural language vocabulary) and the size of the dataset. 
In brief, if the vocabulary of possible discrete tokens is very large, many tokens will accidentally correlate with a label given a finite dataset. These can be used to replace valid tokens to induce the correct class prediction to flip. We provide a theoretical analysis of this phenomenon, including bounds on the prevalence of spuriously correlated tokens.

\paragraph{Contributions.}
(1) We propose a framework of synonymous adversarial examples (AEs) inspired by previous work in NLP; it  facilitates (a)  studying discrete AEs in a domain-agnostic way and (b) identifying AEs in new domains while minimizing domain-specific work.
(2) We use this framework to define and identify AEs in a new domain, computational biology, on tasks such as gene and protein classification.
(3) We also use the framework to study discrete AEs theoretically and identify a novel cause for discrete AEs: spurious token correlations.
(4) Doing so, we lay the groundwork for identifying and understanding discrete AEs in new domains more quickly and more easily, both in computational biology and beyond.


\begin{figure*}[htb!]
\begin{center}
\includegraphics[width=16cm]{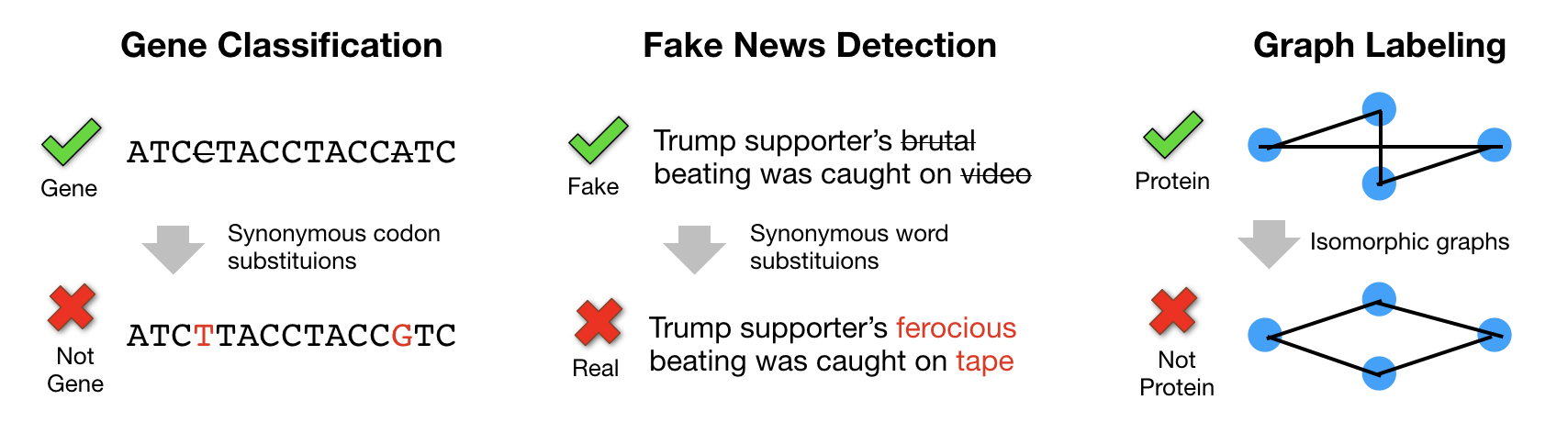}
\end{center}
\vspace{-5mm}
\caption{We study adversarial examples in domains in which inputs are discrete, such as genomics, language, or graphs. Synonymous adversarial examples are constructed for a discrete input by replacing individual symbols without changing the overall meaning of the input, for example by introducing synonymous nucleotide substitutions, synonymous words, or switching to an isomorphic graph. 
}
\end{figure*}

\section{Background}
We study classification problems, in which the goal is to learn a mapping $f : \mathcal{X} \to \mathcal{Y}$ from a discrete input $x \in \mathcal{X}$ to a target label $y \in \mathcal{Y}$, which lies in some finite set of $K$ classes $\mathcal{Y} = \{y_1, y_2, ..., y_K\}$. 
The classifier $f$ associates a score $f_{y_k}(x)$ to each class $y_k$ and outputs the class with the highest score.


\paragraph{Adversarial Examples for Continuous Inputs.}
In settings such as image classification, \citep{szegedy2014intriguing}, 
we say that $x' \in \mathbb{R}^d$ is an adversarial perturbation of $x\in \mathbb{R}^d$ targeting class $y'$ 
if
\begin{align}
& f(x') = y' \;\; \text{and} \;\;  \lVert x - x' \rVert \le \epsilon. \label{eq:image_ae}
\end{align}
The norm $\lVert \cdot \rVert$ captures the notion of an imperceptible perturbation. 
In this context, we refer to $x'$ as an adversarial example for $f$.  Adversarial examples can be obtained by solving an optimization problem of the form
\begin{align}
& \max_{x'} J(x') \;\; \text{s.t.} \;\;  \lVert x - x' \rVert \le \epsilon, \label{eq:image_ae_opt}
\end{align}
where the objective $J(x')$ measures the extent to which $x'$ is adversarial and may be a function of a target class $y' \neq y$. For example, we may take  $J(x') = f_{y'}(x')$ for some target class $y'$.
Algorithms for solving the above objective include the Fast Gradient Sign method or iterative methods based on constrained gradient descent \citep{goodfellow2014explaining,papernot2016limitations}.

\paragraph{Discrete-Input Adversarial Examples.}
Multiple authors proposed adversarial examples for text classification \citep{alzantot2018generating,ebrahimi2018hotflip}, initiating an extensive line of work \citep{gao2018black,jia2019certified}, including recent work on BERT \citep{li2020bert}.
Adversarial attacks also affect graphs \citep{zugner2018adversarial,dai2018adversarial}, usually by modifying embeddings \citep{bojchevski2019adversarial} or by adding/deleting nodes \citep{wang2019attacking}. Each line of work is independent and involves specialized definitions and methods hand-crafted for its domain.



\section{Synonymous Adversarial Examples}

The motivating goal of this paper is to extend adversarial examples to a new domain: computational biology. However, the area of computational biology is broad, featuring many diverse subfields.
Rather than crafting adversarial attacks for each subfield, we approach this problem via a principled framework of synonymous adversarial examples that applies across domains. 


%

\paragraph{Definition.}

Synonymous adversarial examples are constructed by perturbing inputs within a natural and domain-specific equivalence class of inputs such that they are misclassified.
More formally, given a classifier $f$, we say that $x'$ is an adversarial synonym of $x$ targeting class $y'$ if
\begin{align}
& f(x') = y' \;\; \text{and} \;\;  d(x, x') \le \gamma,
\end{align}
for some domain-specific user-specified distance function $d : \mathcal{X} \times \mathcal{X} \to \mathbb{R}^L_+$ and vector of bounds $\gamma \in \mathbb{R}^L$. The distance  is small when inputs $x_1,x_2$ are in some sense equivalent.

Such similarity functions can be naturally defined in many domains. For example, if $x, x'$ are graphs, we may say that $d(x, x') = 0$ if $x, x'$ are isomorphic to each other. If $x_1$ and $x_2$ are words, the function $d$ may gauge the extent to which they are synonyms.
In the context of image classification, we recover the original notion of adversarial examples by taking $d$ to be an $\ell_2$ or $\ell_\infty$ norm constraint. 

Our definition differs from other types of adversarial inputs explored in the literature. These include obfuscated examples \citep{carlini2016hidden}---in which the input appears as white noise but triggers unwanted behavior (e.g., audio that turns on a smartphone)---and concatenative examples \citep{jia2017adversarial}---in which the input is appended with a distracting sequence that contains irrelevant information.

\subsection{Constructing Adversarial Examples}

As in the continuous setting, we propose constructing synonymous adversarial examples by solving the optimization problem
\begin{align}
& \max_{x'} J(x') \;\; \text{s.t.} \;\; d(x, x') \le \gamma, \label{eq:text_ae_opt}
\end{align}
in which the objective $J(x')$ measures the extent to which $x'$ is adversarial (e.g., either minimizes the probability of the true label $y$ or maximizes the probability of a target label $y'$ different to $y$).

However, computing adversarial examples over discrete structures requires a specialized algorithm, as  gradient-based methods are no longer applicable.
We propose a beam search based greedy algorithm inspired by the adversarial text classification literature (Algorithm \ref{alg:opt}).
We show that it works well across domains (including new ones), while still retaining simplicity.


\begin{algorithm}[htb]

\KwData{ Datapoint $x$, objective function $J$, termination threshold $\tau$, beam size $b$, parameters $\gamma, \delta$. }
Initialize beam search at unperturbed data point:\\
$B \leftarrow \{x\}$\\
\While{\\
$\forall x' \in B,~ J(x') < \tau$
and fraction of symbols replaced $\le \delta$}{	
	Create a working set $W = \emptyset$ \;\\
	\For {each sequence $x$ in $B$}{	
    	\For {each symbol $w$ in $x$} {
        	\For {each symbol $\bar w$} {
              substitute $w$ with $\bar w$ to get $\bar x$; if $d(\bar x) \leq \gamma$, then $W \leftarrow W \cup  \{\bar x\} $\;
          }
      }
    }
    \eIf{$W = \emptyset$}{
   \textbf{break} \;
   }{
   	$B \leftarrow$ \{ top $b$ $\bar x \in W$ with highest values of $J(\bar x)$\}
  	}
}
\KwRet{$\arg\max_{\bar x \in B} J(\bar x)$}\;
\caption{Finding Adversarial Examples \label{alg:opt}}
\end{algorithm}

\paragraph{Algorithm Inputs.} We assume that an input $x$ can be represented a {\em sequence} of $n \geq 1$ discrete {\em symbols} or {\em tokens} and denoted by $w_i$ for $i=1, 2, \ldots n$. Algorithm \ref{alg:opt} seeks $x$ that maximize the objective $J$ using a variant of beam search. At each step, it considers all valid one-symbol changes to a sequence (which satisfy our constraints) and chooses the one that improves the objective the most. 

For example, in a genomics setting, inputs may correspond to DNA sequences of genes and symbols are individual nucleotides. Algorithm 1 can be used to replace to nucleotides with synonymous mutations, which do not change the underlying sequence of codons, and hence do not change the protein encoded by the gene. It turns out that such substitutions may nonetheless break gene classification algorithms.


\section{Empirical Analysis}\label{sec:experiments}

Next, we validate the utility and the generality of our approach empirically in new domains (in biology), and in a well-studied domain, text classification.


\subsection{Experimental Tasks}
We study adversarial examples on tasks in genomics, natural language processing, and graph classification.
Our genomics task is to predict whether a given sequence of nucleotides is a valid coding sequence for a particular species. In our experiments, we use mouse exons \citep{biomart} as positive examples. For negative examples, we use true mouse exons, with 50\% of the nucleotides randomly changed. Further details are provided in Appendix B.

We also examine three natural language classification tasks: sentiment analysis on the Yelp Review Polarity dataset \citep{zhang2015character}, spam classification on the TREC 2007 Public Spam Corpus, and fake news classification on the News Dataset \citep{mcintire2017news}. Further details are provided in Appendix C.

Finally, we perform graph classification on several protein datasets, and we define synonymous mutations as isomorphic changes to the shape of the graph, i.e.~a permutation of the rows of the adjacency matrix. This also corresponds to a relabelling of the vertices.

\subsection{Experimental Models}

\paragraph{Naive Bayes} This linear model has a long history in text classification and it is still popular for its simplicity. We convert each document into a bag-of-words representation, and following \cite{wang2012baselines}, we binarize the word features and use a multinomial model for classification. 
\paragraph{Long short-term memory} 
We built a single-layer LSTM with 512 hidden units as in \cite{zhang2015character}. The input to the LSTM is first transformed to a 300-dimensional vector using pretrained \texttt{word2vec} embeddings \cite{mikolov2013efficient}. We then average the outputs of the LSTM at each timestep to obtain a feature vector for a final logistic regression to predict the sentiment. 
\paragraph{Convolutional neural networks} 
We train a world-level CNN with a uniform filter size of 3 in each convolutional feature map; all other settings are identical to those of \cite{kim2014convolutional}.
We also implement the 9-layer character-level network of \cite{conneau2016very}, which includes 4 stages. Each stage has 2 convolutional layers with batch normalization and 1 max-pooling layer; convolutional and pooling layers have strides of 1 and 2, respectively and filters of size 3. We start with 64 feature maps, and double the amount after each pooling step, concluding with k-max pooling layer with $k=8$. The resulting activations in $\mathbb{R}^{4096}$ are fed to 3 fully connected layers.

\subsection{Genomic Sequence Classification}
\label{sec:genetics}

Recall that our task is to predict whether  a sequence of nucleotides is a valid coding region in the mouse genome.
Coding sequences admit a natural similarity metric: when a nucleotide substitution is synonymous, it does not affect the encoded amino acids produced or the resulting protein. Each codon has a set of synonymous substitutions (i.e., 0-6 other codons that would result in the same protein, if substituted in its place); we use this to define our distance function. Thus, for each codon we simply consider up to $N=6$ candidates for it to be replaced with.
We are conservative in our definition of equivalence to ensure validity, but more general approaches also exist.

\begin{wraptable}{right}{3.5cm}
\centering
\begin{tabularx}{3.5cm}{r | c  c}
\toprule
Model & Clean & Adv\\ \midrule
LSTM & 94.6 & 8.9  \\
CNN & 90.8 & 8.0 \\
NB & 88.5 & 8.7\\
\end{tabularx}
\caption{Accuracies for the gene classification  task}
\label{tab:genetics}
\end{wraptable}

More, formally we define the domain-specific distance $d(x_1, x_2)$ between exons $x_1, x_2$ to be zero if they are related by synonymous nucleotide substitutions, and infinity otherwise, and set $\gamma=0$. Additional hyperparameter details are provided in Appendix B.

We use the Ensembl Biomart \cite{biomart} database to download the set of exons for the mouse species (GRCm8.p6). For each exon, we obtain its valid reading frame, thus understanding how the nucleotides are broken up into codons (nucleotide subsequences of length 3). Our training set is of size 100,000 and our test set is of size 10,000. In our experiments, we work with exons of length up to 400 ($>$90\% of all exons), thus each exon is a sequence of up to 400 symbols, each being one of A,G,T,C.
We report the results of applying our algorithm in Table \ref{tab:genetics}. In the LSTM and CNN models, we substituted $9-10\%$ of nucleotides in an exon on average; for the Naive Bayes model, we substituted around $5\%$, suggesting it is easier to fool.

\subsection{Natural Language Classification}
\label{sec:nlpExps}

\begin{table}
\centering
\begin{tabular}{@{}ccrrrr@{}}
\toprule
 \multicolumn{2}{c}{Data}  & NB & LSTM & WCNN & VDCNN \\ \midrule
\multirow{3}{*}{Trec07p}    &   CLN & 97.1\% & 99.1\%    &  99.7\%     & ---          \\
						 & RND  & 97.7\% & 	98.6\%	& 	99.6\%	& ---	 	 \\
                               &   ADV   & 15.1\% & 39.8\%    &  64.5\%     & ---           \\ \midrule

\multirow{3}{*}{Yelp}       &   CLN & 88.01\% &   94.70\%  &    94.18\%   &      94.88\%     \\
							& RND & 86.8\% & 94.5\%	& 94.7\%	& 93.1\%	 	 \\
                               &   ADV  & 44.50\% & 33.19\%  &  34.48\%     &  49.80\%    \\ \midrule         
\multirow{3}{*}{News} &   CLN & 90.70\% &  93.10\%   &   96.00\%    &     93.40\%      \\ 
						& RND  & 84.0\% &	94.6\%& 	93.3\% &	92.7\% 	\\
                               &   ADV  & 50.95\% & 30.03\%    &  79.58\%     &  11.0\% \\ \bottomrule
\end{tabular}
\caption{Classifier accuracy on each dataset. CLN, RND, and ADV stand for clean, randomly corrupted, and adversarially corrupted inputs, respectively.}
\label{tab:results}
\end{table}

\paragraph{Similarity Distance.}

In a natural language context, we would like the synonymous examples $x'$ to retain the same meaning as the original $x$. 
%
We capture the above intution using $L=2$ constraints.

\textit{Semantic similarity:}  We capture the meaning of an utterance using an embedding in which similar sentences are close to each other. Formally, our constraint is
$
	\lVert {v} - {v}' \rVert_2 < \gamma_1 \label{semantics-constraint}
$
where $v$ and $v'$ are thought vectors associated with $x$ and $x'$; sentence embeddings are averages of the word embeddings of \citep{mrksic2016naacl}  which represent synonymity, not relatedness.

\textit{Syntactic similarity:} 
Our syntactic constraint requires that a language model $P : \mathcal{X} \to [0,1]$ trained on the same dataset assigns similar probabilities to the perturbed and the original example:
$
	| \log P( x' ) - \log P( x ) | < \gamma_2  \label{syntax-constraint}
$.
The language model captures the extent to which x “looks like” a spam message or a movie review. 

\paragraph{Experimental Results.}
Table \ref{tab:results} shows the accuracy of each classification model on the three clean datasets as well as on adversarial inputs generated using Algorithm \ref{alg:opt}. We also report accuracies on randomly perturbed examples. 
Details on hyperparameters and the implementation used are provided in Appendix C.

The average fraction of words substituted to be close to 10\%; the threshold of $\delta$ is just an early stopping criterion which is not often reached. Empirically, we observe that the adversarial examples we construct tend to closely retain the intent of the original message.

We verify the quality and the coherence of our examples via human experiments on Amazon Mechanical Turk. As Table \ref{tab:human_accqual} shows, human evaluators achieve similar levels of success at classifying both the original and adversarial examples; they also assign both similar scores when asked to rate them (from 1-5) based on overall ``writing quality''. 


\begin{framed}
\small{\textbf{Task:} Spam filtering. \textbf{Classifier:} LSTM. \textbf{Original label:} 100\% Spam. \textbf{New label:} 89\% Non-Spam.}
\vspace{1mm}

\small{{\bf Text}: your \sout{application} \substi{petition} has been \sout{accepted} \substi{recognized} thank you for your \sout{loan} \substi{borrower} \sout{request} \substi{petition} , which we recieved yesterday , your \sout{refinance} \substi{subprime} \sout{application} \substi{petition} has been \sout{accepted} \substi{recognized} good credit or not , we are ready to give you a \$ oov loan , after further review , our lenders have established the lowest monthly payments . approval process will take only 1 minute . please visit the confirmation link below and fill-out our web-form}
\vspace{1mm}

%
%

\hrule
\vspace{1mm}

\small{\textbf{Task:} Gene classification. \textbf{Classifier:} CNN. \textbf{Original label:} 95\% Gene. \textbf{New label:} 7\% Gene}
\vspace{1mm}

\small{{\bf DNA Sequence}: 
...ATC\sout{C}\substi{T}CTCCTCAT\sout{G}\substi{A}CCACCT...}
\end{framed}
\label{adv-sample}

\subsection{Graph Classification}

We study graph classification models over three public bioinformatics datasets: ENZYMES, PROTEINS, and NCI1, which contain between 600 and 4000 graphs of an average size between 30 and 40. We set the node features to be a vector of ones for every node, and we only use the graph structure captured within the adjacency matrix to perform classification. On each dataset, we train a graph convolutional neural network with one hidden layer of size 128.

\begin{wraptable}{right}{3.5cm}
\centering
\begin{tabularx}{3.5cm}{r | c  c}
\toprule
Model & Clean & Adv\\ \midrule
{\tiny ENZYMES} & 40.6 & 11.9  \\
{\tiny PROTEINS} & 65.4 & 30.1 \\
{\tiny NCI1} & 68.9 & 21.2\\
\end{tabularx}
\caption{Accuracies on graph classification}
\label{tab:graphs}
\end{wraptable}

Recall that two graphs are said to be isomorphic if there exists a bijection (i.e., a relabelling) of the vertices that makes the two graphs identical. Clearly, isomorphic graphs are considered to be equivalent. Formally, we   define the domain-specific distance $d(x_1,x_2)$ between graphs $x_1,x_2$, to   be   zero   if   they are  isomorphic, and infinity  otherwise,   and  we set $\gamma=  0$. 
We apply the same greedy algorithm as in other tasks, except we now search in the space of row permutations among the input graphs.

We report the results of applying our algorithm in Table \ref{tab:graphs}. As in earlier tasks, the adversarial accuracy is significantly below the accuracy of the algorithm on clean data. We observe that across the three datasets, our algorithm permuted 15\% of rows on on average.

\section{Features of Adversarial Examples}\label{sec:features}

\subsection{Transferability Analysis}
\begin{table}
\hspace{-3mm}
\begin{tabular}{lccccc}
\toprule
		& {\small CLN} & {\small NB}	& {\small LSTM}	& {\small WCNN}	& {\small VDCNN} \\ \midrule
{\tiny NB}		& 88.01 & 44.50\% & 80.76\% & 79.90\% & 88.03\% \\
{\tiny LSTM}	& 94.70 & 81.12\% & 33.19\% & 85.17\% & 90.99\% \\
{\tiny WCNN}	& 94.18 & 78.17\% & 82.45\% & 34.48\% & 89.96\% \\
{\tiny VDCNN}	& 94.88 & 81.94\% & 88.50\% & 88.08\% & 49.80\% \\ \bottomrule
\end{tabular}
\caption{Transferability of adversarial examples on the Yelp task. Cell $(i,j)$ shows the accuracy of adversarial samples generated for model $i$ evaluated on model $j$.}
\label{cross-model}
\end{table}

We examine whether synonymous adversarial examples transfer between models (as they often do between image classifiers), focusing on the Yelp dataset. As seen in Table \ref{cross-model}, there is a moderate degree of transferability between models, with accuracy on adversarial examples generated for another model being lower than accuracy on the clean (and randomly perturbed) dataset

\subsection{Defending Against Adversarial Attacks}
\label{sec:miyatoSection}

There have been several proposed methods of securing classification algorithms against adversarial examples \citep{advDefense1,advDefense2,ensembleAdvDefense}. In particular, ~\citet{miyato2017virtual} introduces the idea of adversarial training for natural language classification tasks. They perform small adversarial perturbations of word embeddings during the training phase, in an attempt to learn a more robust network. To evaluate our adversarial attack algorithm against this defense, we train an LSTM model for the gene classification task, using the adversarial training method.

\begin{wraptable}{right}{4.5cm}
\centering
\begin{tabular}{r | c  c}
\toprule
& Vanilla & Hardened\\ \midrule
CLN & 94.6 & 94.7  \\
ADV & 8.9& 33.9 \\
\end{tabular}
\caption{Accuracy comparison against a model trained with adversarial perturbations on gene classification.}
\label{tab:miyatoComparison}
\end{wraptable}
In Table \ref{tab:miyatoComparison}, we report the performance of the hardened model on the clean dataset and adversarial examples generated for it. We observe an increase in accuracy on the perturbed example, indicating that the model has improved robustness against adversarial examples. However, our algorithm still results in a significant reduction in performance on the model, thus showing that our attack is nevertheless quite potent. 

\subsection{Comparison vs.~Other Attack Methods}
In the context of text classification, we compared against earlier specialized methods to validate the effectiveness of our approach, which we derived in a domain agnostic way. We removed the language model constraint and experimented with beam search size and maximum word substitutions, essentially reproducing the approach of \citet{ebrahimi2018hotflip}. We also removed the semantic constraint, which reduces to the unconstrained method of \citet{papernot}.
Finally, we tested a version that removes semantic constraints, but keeps the syntactic constraint; these comparisons also yield an ablation analysis of our model.

In the context of text classification, a fully quantitative comparison is challenging: all methods generate strong adversarial perturbations, but their similarity to the original input is only measurable qualitatively \citep{morris2020textattack}, and we relied on human analysis. We determined that unconstrained adversarial examples \citep{papernot} were perceived as being of poor quality; adding semantic or syntactic constraints \citep{ebrahimi2018hotflip} yielded significantly improved results, but with certain error patterns. We report these errors and include samples from the above methods in Appendix D. We also compare against the approach of  \citet{naturalAEs}---in which a generative model is used to construct adversarial inputs---in Appendix E.

\begin{wraptable}{right}{4cm}
\hspace{-4mm}
\begin{tabular}{r | c  c}
\toprule
& GreedyP & Beam \\ \midrule
Gene & 12.3 & 8.9  \\
Yelp & 10.9 & 5.9 \\
\end{tabular}
\caption{Accuracy of an LSTM on the Yelp and gene classification tasks on adversarial examples generated with prioritized greedy search \& with our full beam search.}
\label{tab:search_alg}
\end{wraptable}
Next, we experimented with the search algorithm for finding adversarial examples given our set of constraints. We compared against simple greedy word substitution (in text classification, this yields essentially the method of \citet{jin2020bert}), as well as word substitution prioritized by the importance of the word on the target label (``Greedy-P"). 
We tested this on LSTM models for gene classification and Yelp sentiment analysis (Table \ref{tab:search_alg}); while even simple methods yield good attacks, our beam search method further improves performance.
A further refinement of our work is the genetic search method of \citet{alzantot2018generating} (the constraints and the rest of their method are mostly the same as ours) and its fast version by \citet{jia2019certified}. This approach is more advanced and requires training language models in new domains; we defer adding these optimizations to future work.

\section{Towards Explaining the Existence of Synonymous Adversarial Examples}




We attribute the existence of adversarial perturbations to spurious token correlations, a statistical phenomenon that is specific to discrete spaces and that is caused by a mismatch between the dimensionality of a problem (e.g., the size of a natural language vocabulary) and the size of the dataset. 

\subsection{Intuition}
Given a finite dataset, as we increase the size of the vocabulary, some discrete tokens will by pure chance appear to be associated more frequently with one of the classes, even though in expectation, their association with all the classes should be the same.
For example, in a sentiment classification dataset, the word ``definitely" may by pure chance occur more frequently in positive reviews. Even though the probability of this happening is small, given a large enough vocabulary, we will eventually observe a number of such words. We refer to them as spurious tokens.

Such spurious token naturally lead to adversarial examples, even in such simple models as Naive Bayes.
In the above example, if the synonym ``certainly" is not a spurious token, we can replace all of its occurrences with ``definitely", which will contribute towards adversarially flipping the class without changing the meaning of the overall input text.

Additionally, certain synonymous tokens may be correlated with a certain class for reasons beyond simple statistical noise. Certain tokens may truly have different conditional class probabilities because of multiple meanings (e.g., awfully great deal and awfully bad) or because of hidden patterns that we don’t understand (e.g., dogs are truly mentioned more often in spam than cats). In both cases, these tokens can also be used to construct adversarial examples.

\paragraph{Continuous Perturbations.}
Once an input $x$ has been embedded into a continuous representation, the remainder of the neural network becomes susceptible to the same types of attacks that affect continuous input classifiers. 
This intuition is substantiated by our experiments in Section \ref{sec:features}: when adversarial training has been performed on the embedding layer as proposed by \citet{miyato2017virtual}, the performance of our algorithm goes down, suggesting these embedding-based vulnerabilities become harder to exploit.

\subsection{Theoretical Analysis}

We expand upon this intuition via a formal analysis. The full version with the proofs is in Appendix A.

\paragraph{Model} Consider a dataset of size $D$ over a vocabulary of size $V$. Each data point $d \in [D]$ is comprised of $L$ discrete tokens. 
We consider a data-generating process in which there are two classes labeled by $Y \in \{0,1\}$. Each of the $D$ data points is generated by a process in which we first sample a class $Y$ uniformly at random; $L$ tokens comprising the datum are then sampled from a multinomial distribution with parameters $p_{ly}$ for $l \in [L]$ and $y \in \{0,1\}$.
We assume that there are two types of tokens. Informative tokens are truly indicative of the class $y$ and $p_{l0} \neq p_{l1}$. Most tokens are considered to be uninformative, meaning that $p_{l0} = p_{l1}$. 

We work with a trained classifier $p_\theta(Y=y|x)$ that models the probability of class $y$ given inputs $x$.
Without loss of generality, we assume the attacker wants to change the prediction of $p_\theta$ from 0 to 1.

Let $w_{ly} = \log(\hat p_{ly})$ be the empirical log-probability of token $l$ in class $y$. 
We define a token to be spurious if the empirical log-ratio of probabilities $\delta_l = w_{l1}-w_{l0}$ exceeds a threshold $\gamma > 0$.

\subsection{Spuriously Correlated Tokens Exist}

We start by proving and quantifying the existence of spuriously correlated tokens within this idealized framework.
The following proposition makes precise our intuition about the existence of spuriously correlated tokens.

\begin{proposition}
Suppose that the occurrence probability $p_{l0} = p_{l1}$ of each of V uninformative token $l$ at most $p$. 
The expected number of spurious tokens is given by 
$$ V \left(1 - \Phi(\gamma / \sigma(p))\right),$$
where $\sigma^2(p) = \frac{1}{D}\left(\frac{1}{p} - 1\right)$ and $\Phi$ is a CDF of a standard Gaussian.
\end{proposition}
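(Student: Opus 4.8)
The plan is to reduce the statement to a computation about the tail of a binomial/multinomial count that is then approximated by a Gaussian. First I would fix an uninformative token $l$ with occurrence probability $p_{l0}=p_{l1}=:q\le p$. Among the $D$ data points, roughly half (say $D_0$) have label $0$ and half ($D_1$) have label $1$; within the class-$y$ data, the number of occurrences $N_{ly}$ of token $l$ across the $L$ positions is a sum of indicator variables, so $\hat p_{ly}=N_{ly}/(L D_y)$ is an empirical frequency with mean $q$ and variance of order $q(1-q)/(L D_y)$. Then $w_{ly}=\log\hat p_{ly}$, and by the delta method (a first-order Taylor expansion of $\log$ around $q$), $w_{ly}$ is approximately Gaussian with mean $\log q$ and variance $\frac{1}{q}\cdot\frac{q(1-q)}{LD_y}=\frac{1-q}{qLD_y}$; absorbing the $L$ and the factor $1/2$ from $D_y\approx D/2$ into the definition of the effective sample size gives variance of the claimed form $\sigma^2(q)=\frac{1}{D}\!\left(\frac{1}{q}-1\right)$ (the paper's $D$ should be read as the per-class effective count, which is the cleanest way to present it). Since token $l$ is uninformative, $w_{l0}$ and $w_{l1}$ are independent (they come from disjoint data) and identically distributed, so $\delta_l=w_{l1}-w_{l0}$ is approximately Gaussian with mean $0$ and variance $2\sigma^2(q)$; folding the factor $2$ into $\sigma$ as the paper evidently does, $\delta_l/\sigma(q)$ is approximately standard normal.

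Next I would compute $\Pr[\delta_l>\gamma]\approx 1-\Phi(\gamma/\sigma(q))$. Because $\Phi$ is increasing and $\sigma(q)$ is increasing in $q$ (as $1/q-1$ is decreasing... wait, $\sigma$ is \emph{decreasing} in $q$, so $\Phi(\gamma/\sigma(q))$ is \emph{increasing} in $q$ for the relevant range), the bound $q\le p$ gives $1-\Phi(\gamma/\sigma(q))\le 1-\Phi(\gamma/\sigma(p))$, i.e.\ replacing each token's true probability by the worst case $p$ only increases the spurious probability; this is how the single quantity $\sigma(p)$ enters. Finally, let $S_l$ be the indicator that token $l$ is spurious. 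By linearity of expectation, the expected number of spurious tokens is $\sum_{l=1}^{V}\Pr[S_l=1]=\sum_{l=1}^V\big(1-\Phi(\gamma/\sigma(q_l))\big)\le V\big(1-\Phi(\gamma/\sigma(p))\big)$, which is the claimed expression (stated as an equality under the reading that every uninformative token has occurrence probability exactly $p$, or as an upper bound under the ``at most $p$'' hypothesis).

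The main obstacle is making the Gaussian approximation honest: the count $N_{ly}$ is binomial (or multinomial across positions) rather than exactly normal, and $\log$ is nonlinear and blows up near $0$, so the delta-method step needs the regime $D q \gg 1$ to be valid, and one should either invoke a Berry--Esseen / central limit bound with an explicit remainder or simply state the result as an asymptotic (large $D$) approximation, which is how the proposition appears to be intended. A secondary subtlety is bookkeeping the constants: the factors of $L$, the split into $D_0,D_1\approx D/2$, and the factor $2$ from differencing two independent copies all have to be absorbed consistently into the single symbol $\sigma^2(p)=\frac1D(\frac1p-1)$; I would handle this by defining $n$ to be the effective per-class token count and carrying $n$ through, then substituting at the end. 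Everything else---linearity of expectation over the $V$ tokens, monotonicity of $\Phi$, the independence of $w_{l0}$ and $w_{l1}$---is routine.
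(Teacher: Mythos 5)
Your overall strategy is the same as the paper's: approximate $w_{ly}=\log\hat p_{ly}$ by a Gaussian via a first-order Taylor expansion of $\log$ around the true probability (the paper isolates this as a lemma on $\log\frac{X/N}{Y/M}$ for independent binomials), conclude that $\delta_l=w_{l1}-w_{l0}$ is approximately $\mathcal{N}\big(0,\sigma_l^2\big)$ with $\sigma_l^2=\frac{1}{D_0}\big(\frac{1}{p_{l0}}-1\big)+\frac{1}{D_1}\big(\frac{1}{p_{l1}}-1\big)$, compute the tail probability $\mathbb{P}[\delta_l>\gamma]$, and sum over the $V$ tokens by linearity of expectation. Your caveats about the Gaussian approximation and the constant bookkeeping ($D_0=D_1=D/2$, the factor from differencing two independent estimates) are fair and match what the paper glosses over.

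However, there is a genuine error in the monotonicity step, and it flips the meaning of the result. You correctly note that $\sigma(q)$ is decreasing in $q$, hence $\Phi(\gamma/\sigma(q))$ is increasing in $q$; but from $q\le p$ this gives $\Phi(\gamma/\sigma(q))\le\Phi(\gamma/\sigma(p))$ and therefore $1-\Phi(\gamma/\sigma(q))\ \ge\ 1-\Phi(\gamma/\sigma(p))$, the opposite of the inequality you wrote. A rarer token has a noisier empirical log-ratio (larger variance), so it is \emph{more} likely, not less, to cross the threshold $\gamma$; capping the occurrence probability at $p$ therefore makes $\sigma(p)$ the smallest admissible standard deviation and yields $V\big(1-\Phi(\gamma/\sigma(p))\big)$ as a \emph{lower} bound on the expected number of spurious tokens. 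This is exactly how the paper proves and states it in the appendix (``the expected number of spurious tokens is at least $V_u(1-\Phi(\gamma/\sigma(p)))$''; the main-text ``is given by'' is loose phrasing). Your reading of the conclusion as an upper bound is not a cosmetic issue: an upper bound would not establish the proposition's point, which is to guarantee that spuriously correlated tokens are abundant when the vocabulary is large relative to the dataset. Fixing the sign in that one step (and interpreting the displayed quantity as a lower bound) makes your argument coincide with the paper's.
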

\begin{proof}(Sketch) 
The log-probability of a token $l$ for a class $y$ is given by
$ w_{ly} = \log(\hat p_{ly}) = \log \frac{D_{ly}}{D_y}. $
Recalling the assumption on the data generating process, we note that each $D_{ly}$ is distributed as a Binomial random variable $\mathrm{Bin}(D_y, p_{ly})$. 

We use the Lemma of Katz et al. in Appendix A to argue that $\delta_l = \log \frac{D_{l1}/D_1}{D_{l0}/D_0}$ is approximately normal with mean $\log \frac{p_{l1}}{p_{l0}} = 0$ and variance $\sigma^2_l = \frac{1}{D_0}(\frac{1}{p_{l0}} - 1) + \frac{1}{D_1}(\frac{1}{p_{l1}} - 1)$. 
\vk{Can you add more comments on why? Also, did you get the chance to double check if the assumptions of that paper hold?}\en{OK. added the proof of the lemma}
Assuming for simplicity that total number of words in both classes match, i.e. $D_0 = D_1 = \frac 12 D$, and since $p_{l0} = p_{l1} \leq p$, the variance $\sigma^2_l$ is bounded below by 
$
\sigma^2(p) = \frac{1}{D}\left(\frac{1}{p} - 1\right).
$

Considering the probability of a token being spurious, we get 
    $$
    \mathbb{P}[\delta_l > \gamma] = 1 - \Phi(\gamma / \sigma_l) \geq 1 - \Phi(\gamma / \sigma(p)).
    $$ 
Thus, the expectation of total number of spurious tokens is bounded below by the expectation of a Binomial random variable with the total number of trials $V$ and the probability of success $1 - \Phi(\gamma / \sigma(p))$. Hence, the expected number of spurious tokens is at least $V \left(1 - \Phi(\gamma / \sigma(p))\right)$.
\end{proof}

In order to analyze the effects of spurious tokens, we make the assumption that their presence influences the predictions of the model.

\begin{assumption}
Let $t$ be a spurious token. Appending $t$ to input $x$ or replacing a token in $x$ with $t$ increases the score of class $y=1$ by at least $\gamma>0$.
\end{assumption}

In other words, we assume that the model $p_\theta$ has been correctly trained and captures the signal in the data. 
This assumption is needed for non-convex models such as neural networks for which training is not guaranteed to succeed.

Note that this assumption is true by construction for Naive Bayes (and extends to other linear models). The value $w_{ly}$ is the weight learned by NB for token $l$ and class $y$. The presence of token $l$ increases the score of class $1$ by $\delta_l = w_{l1}-w_{l0} > \gamma$. For deep learning models, we demonstrate empirically that most spurious tokens influence predictions in Section \ref{sec:experiments} and we model their strength by $\gamma>0$. 

\subsection{Concatenative Adversarial Examples}
A direct consequence of the existence of spuriously correlated tokens is that given a sufficiently large vocabulary, we may construct examples by appending a small number of such tokens to a data point. This explains the existence of concatenative adversarial examples, which have been recently studied in the context of reading comprehension
\citep{jia2017adversarial}. 

The proposition below gives a formal expression for the number of extra tokens that need to be concatenated to form such an adversarial example. Assume that for an informative token $l$, we have
$ \eta < p_{ly} / p_{ly'}, $
where $y$ is the true class of $l$ and $y'$ is the other class.
The parameter $\eta > 0$ serves as a measure of signal strength, with larger values of $\eta1$ indicating that each informative token carries more signal, making is easier to learn a model for distinguishing between the two classes.

\begin{proposition}
Suppose that the occurrence probability $p_{l0} = p_{l1}$ of each of $V$ uninformative token $l$ at most $p$. Consider a data point of $L$ tokens and suppose that it contains at most $r \cdot L$ informative tokens with signal strength at most $\eta$. Then, with probability at least $1-\rho$, the number of adversarial insertions that is needed to be made to change the predicted class is at most
$$\left\lceil \frac{rL\eta}{\sigma(p) \Phi^{-1}(\rho^{\frac{1}{V}})}\right\rceil,$$
where $\sigma^2(p) = \frac{1}{D}\left(\frac{1}{p} - 1\right)$ and $\Phi$ is a Normal CDF.
\end{proposition}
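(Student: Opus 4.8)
The plan is to split the argument into a deterministic counting step and a probabilistic existence step, the latter being a refinement of the proof of Proposition~1. For the counting step, I would use the additive form of the class-$1$-vs-class-$0$ score that holds exactly for Naive Bayes and, more generally, is the content of Assumption~1: writing $S(x)=\sum_{l\in x}\delta_l$ with $\delta_l=w_{l1}-w_{l0}$, inserting a token $t$ changes $S$ by $\delta_t$ and inserting $k$ copies of $t$ changes it by $k\delta_t$. Since the datum is truly of class $0$, contains at most $rL$ informative tokens, each of signal strength at most $\eta$ (so each pulls the score toward class $0$ by at most $\eta$ in log-odds units), while its uninformative tokens contribute essentially $0$, we have $S(x)\ge -rL\eta$. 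Hence, given any spurious token $t$ with $\delta_t\ge c$, inserting $k$ copies of $t$ gives $S(x')\ge -rL\eta+kc$, which is nonnegative---i.e.\ the prediction has flipped to class $1$---as soon as $k\ge rL\eta/c$; so $\lceil rL\eta/c\rceil$ insertions suffice.

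For the existence step, I would reuse the near-normal approximation from Proposition~1: by the Katz et al.\ lemma, the empirical log-ratio $\delta_l$ of each of the $V$ uninformative tokens is approximately $\mathcal{N}(0,\sigma_l^2)$ with $\sigma_l^2\ge\sigma^2(p)=\frac1D(\frac1p-1)$. Treating the token counts as (approximately) independent, as in the proof of Proposition~1, gives for every $c>0$
\[
\mathbb{P}\big[\,\forall l:\ \delta_l\le c\,\big]=\prod_{l=1}^{V}\Phi(c/\sigma_l)\le\Phi\big(c/\sigma(p)\big)^{V},
\]
where the inequality uses $\sigma_l\ge\sigma(p)$ and monotonicity of $\Phi$. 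Choosing $c=c^\star:=\sigma(p)\,\Phi^{-1}(\rho^{1/V})$ makes the right-hand side equal to $\rho$, so with probability at least $1-\rho$ there is a spurious token $t$ with $\delta_t\ge c^\star$ (this needs $\rho^{1/V}>\tfrac12$ for $c^\star>0$, which holds in the large-$V$ regime of interest). Combining with the counting step gives the claimed bound $\lceil rL\eta/(\sigma(p)\Phi^{-1}(\rho^{1/V}))\rceil$.

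I expect the probabilistic step to be the main obstacle: the counts $D_{l1}$ are jointly multinomial rather than independent, so the factorization $\mathbb{P}[\forall l:\delta_l\le c]=\prod_l\Phi(c/\sigma_l)$ is only approximate. I would justify it either as Proposition~1 does---viewing each count marginally as $\mathrm{Bin}(D_1,p_{l1})$ and appealing to approximate independence for large $V$---or, more rigorously, via the negative association of multinomial counts, which makes $\prod_l\Phi(c/\sigma_l)$ a valid \emph{upper} bound on $\mathbb{P}[\forall l:\delta_l\le c]$, exactly the direction needed. Two smaller points also require care and are glossed in the sketch above: that $S(x)\ge -rL\eta$ really holds for a given $x$ (one should either absorb an $O(\sqrt L\,\sigma(p))$ fluctuation of the uninformative tokens present in $x$, or read the bound in expectation over $x$), and that the same strong spurious token may be inserted repeatedly---fine for count-based linear models, covered by Assumption~1 for general models, and otherwise handled by instead using a set of distinct strong spurious tokens, of which Proposition~1 guarantees many in expectation.
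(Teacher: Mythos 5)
Your proposal is correct and follows essentially the same route as the paper: approximate normality of $\delta_l$ with variance at least $\sigma^2(p)$, a max-over-$V$-tokens argument showing some spurious token exceeds $\sigma(p)\Phi^{-1}(\rho^{1/V})$ with probability $1-\rho$ (the paper packages this as its Lemma on maxima of i.i.d.\ Gaussians), and a counting step that $T=\lceil rL\eta/c^\star\rceil$ insertions overcome the total signal $rL\eta$. Your added caveats (dependence of multinomial counts, positivity of $\Phi^{-1}(\rho^{1/V})$, repeated insertion of one token vs.\ the paper's top-$T$ tokens) only tighten steps the paper glosses over.
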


\subsection{Synonymous Adversarial Examples}

Next, we analyze the susceptibility of models to adversarial examples in our idealized framework.
We consider an adversarial attack model in which each token has $S$ possible synonyms, assumed to be non-informative. Our adversarial attack model on a data point allows $T$ substitutions of a token to one of its $S$ admissible synonyms. We further assume that within each data point $d$, at most a fraction $0 < r < 1$ of tokens are informative, for a total of $r \cdot L$ tokens that are indicative of the true label.

Assume again that for an informative token $l$, we have
$ \eta < p_{ly} / p_{ly'}, $
where $y$ is the true class of $l$ and $y'$ is the other class.

\begin{proposition}
Given a budget $T$ of adversarial changes, we can swap $T$ non-informative tokens to their synonyms so the predicted class is changed with probability at least
$$\sum_{t=T}^{(1-r)L} \binom{(1-r)L}{t} \left(1-\phi\right)^t \phi^{(1-r)L-t},$$
where $\phi = \left(\Phi\left(\frac{rL\eta}{\sqrt{2}T\sigma(p)}\right)\right)^S$, $\sigma^2(p) = \frac{1}{D}\left(\frac{1}{p} - 1\right)$ and $\Phi$ is a CDF of a standard Gaussian. \qedsymbol
\end{proposition}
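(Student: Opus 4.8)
The plan is to mimic the additive Naive Bayes picture: reduce ``the predicted class can be flipped with $T$ synonym swaps'' to ``at least $T$ of the non-informative positions of $x$ admit a sufficiently helpful synonym,'' and then bound the latter event with a Binomial, using the Gaussian approximation to $\delta_l$ from the proof of Proposition~1. Assume w.l.o.g.\ that the true label of $x = (w_1,\dots,w_L)$ is $0$ and that $p_\theta$ currently predicts $0$. In the Naive Bayes model with a uniform prior the log-odds score is $\sum_{i=1}^L \delta_{w_i}$, and replacing a non-informative token $w$ by a synonym $s$ changes it by exactly $\delta_s - \delta_w$ (for a general $p_\theta$ this additive accounting is replaced by Assumption~1). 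Since $x$ contains at most $rL$ informative tokens and each of them contributes at most $\eta$ against class $1$, the score the attacker must make up is at most $rL\eta$; hence it suffices to perform $T$ swaps each contributing a gain of at least $c := rL\eta / T$, because then the total gain is at least $Tc = rL\eta$ and the prediction flips.

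Call a non-informative position $i$ \emph{useful} if some synonym $s$ of $w_i$ satisfies $\delta_s - \delta_{w_i} \ge c$. If at least $T$ positions are useful the attacker swaps $T$ of them and wins, so the success probability is at least $\mathbb{P}[\,\#\{\text{useful positions}\} \ge T\,]$. For a single position, $w_i$ and its $S$ candidate synonyms are all non-informative, so by the computation behind Proposition~1 each $\delta$ is approximately $N(0,\sigma_\cdot^2)$ with $\sigma_\cdot^2 \ge \sigma^2(p)$; hence each difference $\delta_s - \delta_{w_i}$ is approximately mean-zero Gaussian with variance at least $2\sigma^2(p)$, giving $\mathbb{P}[\delta_s - \delta_{w_i} < c] \le \Phi(c/(\sqrt 2\,\sigma(p))) = \Phi(rL\eta/(\sqrt2\,T\sigma(p)))$. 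Treating the $S$ synonym swaps as (approximately) independent, the probability that position $i$ is \emph{not} useful is at most $\phi := \Phi(rL\eta/(\sqrt2\,T\sigma(p)))^S$, so position $i$ is useful with probability at least $1-\phi$. Treating the $(1-r)L$ non-informative positions as independent as well, $\#\{\text{useful}\}$ stochastically dominates $\mathrm{Bin}((1-r)L,\,1-\phi)$, and therefore $\mathbb{P}[\#\{\text{useful}\}\ge T] \ge \sum_{t=T}^{(1-r)L}\binom{(1-r)L}{t}(1-\phi)^t\phi^{(1-r)L-t}$, which is exactly the claimed bound (if $x$ has more than $(1-r)L$ non-informative positions the bound only improves).

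I expect the main obstacle to be justifying the two independence claims — across the $S$ synonyms of a position (which in fact share the summand $\delta_{w_i}$) and across the $(1-r)L$ positions — since all the empirical probabilities come from a single multinomial sample and are negatively correlated rather than independent. The standard remedy, in keeping with the Gaussian-approximation style of the paper, is to Poissonize so that the counts $D_{ly}$ become independent Poisson variables with means $D_y p_{ly}$ (an excellent approximation when $V$ is large and every $p_{ly}$ is tiny), and to note that deleting $x$ from the corpus moves each empirical probability by only $O(1/D)$, so the score of $x$ and the $\delta$'s are asymptotically independent. A secondary point needing care is the reduction ``gain $\ge rL\eta$ flips the class'': it discards the mean-zero noise of the $\approx (1-r)L-T$ untouched non-informative tokens and the sampling fluctuation of the informative tokens, which is legitimate only in the idealized regime $rL\eta \gg \sqrt{L}\,\sigma(p)$; for non-linear models this additive bookkeeping must be replaced throughout by an appeal to Assumption~1, at the cost of folding the $\sqrt2$ into the constant.
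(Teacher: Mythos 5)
Your proposal is correct and follows essentially the same route as the paper's own proof: reduce success to having at least $T$ non-informative positions whose best of $S$ synonym swaps gains at least $rL\eta/T$, model each swap gain $\Delta_{ls}=\delta_s-\delta_{w_i}$ as an approximately mean-zero Gaussian with variance at least $2\sigma^2(p)$ so a position is ``useful'' with probability at least $1-\phi$, and bound the count of useful positions by a $\mathrm{Bin}((1-r)L,\,1-\phi)$ tail. The independence and additive-accounting caveats you flag are real but are treated at the same heuristic level in the paper, so your argument matches its proof in both structure and rigor.
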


The above proposition establishes conditions when perturbations exist as a function of dataset size and other parameters.

\section{Previous Work and Discussion}

\paragraph{Adversarial Attacks on Text and Graphs.}
Multiple authors proposed adversarial examples for text classification problems \citep{alzantot2018generating,ebrahimi2018hotflip}, supporting an extensive line of work, including character-level models \citep{gao2018black}, word deletion \citep{feng2018pathologies}, fast defenses \citep{jia2019certified}), and attacks on BERT \citep{jin2020bert,garg2020bae,li2020bert}.
\citet{morris2020textattack} provides a unified framework for these methods.
Adversarial attacks also affect graphs \citep{zugner2018adversarial,dai2018adversarial,wu2019adversarial}, usually via black-box modifications \citep{chang2020restricted} to node or edge embeddings \citep{bojchevski2019adversarial}, via adding or removing nodes \citep{wang2019attacking}, or on graph matchings \citep{zhang2020adversarial}
We include a discussion of these methods and additional works in Appendix E.

\paragraph{Domain-Agnostic Methods.} Our work seeks to generalize across domains, although some components (e.g., constraints) still need to be crafted from prior knowledge. However, the resulting framework yields general theoretical insights, and provides strong guidance on how to create and study adversarial examples in new domains with less effort.

Our search algorithm is also general. Note that it is similar to search methods in NLP attacks. The sophistication of our search method is not a primary contribution; rather, we show that it works well across domains (including new ones), while still being simple.

\section{Conclusion}
We define synonymous adversarial examples across discrete domains, and we use a simple beam search-like algorithm for generating them in both well-studied and new domains, such as biology.
%
We offer insights into why these vulnerabilities exist and suggest a way to improve the robustness of classification algorithms via adversarial training.

\newpage
\bibliographystyle{icml2020}
\bibliography{paper}
\newpage
\appendix
\section{Theoretical Analysis}

In this section, we provide a theoretical analysis of discrete-space adversarial examples. 

\subsection{Model}

We start with the following idealized model of the data generating process.

Consider a dataset of size $D$ over a vocabulary of size $V$. Each data point $d \in [D]$ is comprised of $L$ discrete tokens. We are interested in analyzing and explaining the susceptibility of algorithms to adversarial examples over this dataset.

\subsubsection{Multinomial Naive Bayes}

Recall that given a vector $x$ of size $V$ indicating the number of times each word in the vocabulary occurs in a data point, a Naive Bayes classifier model the probability of assigning a class $y \in Y$ to the data point using Bayes' theorem:
\begin{equation}
    \label{eq:bayes}
    p(y | x) = \frac{p(y) p(x | y)}{p(x)}.
\end{equation}

The prior distribution of $y$ is estimated as a proportion of each class in the dataset: $p(y) = \frac{1}{D} \sum_{i=1}^D [y_i = y]$. There are several possible choices of distribution of $p(x | y)$; in this analysis, we focus on Multinomial Naive Bayes, which assumes that the likelihood $p(x | y)$ is given by a multinomial distribution with parameters $(\hat p_{1y}, \dots, \hat p_{Ly})$ estimated from data: $\hat p_{ly} = \frac{D_{ly}}{D_y}$, where $D_{ly}$ indicates the number of times the word $l$ occurs in data points with class $y$, $D_y$ --- the total number of words in data points with class $y$. 

The algorithm assigns the class with the highest probability to a data point. Taking the log of both sides of \ref{eq:bayes}, we note that the decision rule is linear with respect to the input $x$:
$$
    \log p(y | x) \propto \log p(y) + \sum_{l=1}^L x_l \log \hat p_{ly} = b_y + \langle w_y, x \rangle,
$$
where $w_y = (\log \hat p_{1y}, \dots, \log \hat p_{Ly})$.

\subsubsection{Setup}

We consider a data-generating process in which there are two classes labeled by $Y \in \{0,1\}$. Each of the $D$ data points is generated by a process in which we first sample a class $Y$ uniformly at random; $L$ tokens comprising the datum are then sampled from a multinomial distribution with parameters $p_{ly}$ for $l \in [L]$ and $y \in \{0,1\}$.

We assume that there are two types of tokens. Informative tokens are truly indicative of the class $y$ and $p_{l0} \neq p_{l1}$. Most tokens are considered to be uninformative, meaning that $p_{l0} = p_{l1}$. In order to simplify our analysis, we assume that for an informative token $l$, we have
\begin{align*}
    \eta_0 < \log \frac{p_{l0}}{p_{l1}} & & \eta_1 < \log \frac{p_{l1}}{p_{l0}}
\end{align*}
for some parameters $\eta_0, \eta_1 \in \mathbb{R}$ that serve as a measure of the learning difficulty of the problem. Intuitively, larger values of $\eta_0, \eta_1$ indicate that each informative token carries more signal, making is easier to learn a model for distinguishing between the two classes.

We further assume that within each document $d$, at most a fraction $0 < r < 1$ of tokens are informative, for a total of $r \cdot L$ tokens that are indicative of the true label.

We consider an adversarial attack model in which each token has $S$ possible synonyms. For simplicity, we assume a sufficiently large vocabulary, such that the $S$ synonyms are all uninformative tokens. The adversarial attack model allows performing $T$ substitutions on a given data point; each substitution consists of changing the token to one of its $S$ admissible synonyms. \en{(Uncomment me) Further in this analysis, we will assume that the attacker is interested in changing the prediction from 0 to 1.}

\subsection{Results}

Next, we perform an analysis of the above attack model within the idealized framework. The goal of the analysis is to explain a specific source of vulnerability that affects discrete-input adversarial examples and that is distinct from the infinitesimal adversarial perturbations that are widely known to affect machine learning models over continuous inputs.

Specifically, we claim that adversarial examples deep learning models over discrete inputs can be attributed to two distinct types of perturbations. The first type occurs in embedding space and is analogous to infinitesimal perturbations in continuous-input models. The second type involves perturbing the discrete symbols directly and can be explained by a statistical process that is distinct from the ones that explain infinitesimal continuous perturbations.


\subsubsection{Intuition}

Intuitively, perturbations over discrete inputs arise because of a mismatch between the dimensionality of the problem (i.e., large vocabulary sizes) and the amount of data available. Given a large vocabulary of tokens that are distributed with equal probability among two classes, a sufficiently large number of them will occur disproportionately often in one class relative to the other. These tokens represent a {\em spurious} vocabulary, since given a limited dataset, they will appear to be spuriously correlated with one of the two classes.

Having such spuriously-correlated tokens enables the creation of adversarial examples. As in early work on reading comprehension, a small number of spurious tokens that are highly correlated with some class can be appended to the existing set of tokens in a data point \citep{jia2017adversarial}. In the context of synonymous adversarial examples, we expect there will be multiple tokens with synonyms that correlate with the opposite class. By swapping these tokens, we can again arrive at adversarial examples.

\subsubsection{Spurious Vocabulary}

In this section, we start by quantifying the number of uninformative tokens that will be spuriously correlated with a given class. We will first need the following lemma:

\begin{lemma}
\vk{this is still missing the assumptions and stuff like that}\en{OK. added $0<p,q<1$ and a rigorous proof. I hope that you will enjoy the proof but feel free to delete it if you find it unnecessary. If you decide to delete, uncomment the citation above} Let us take two Binomial random variables $X \sim \mathrm{Bin}(N, p)$, $Y \sim \mathrm{Bin}(M, q)$ with $0 < p, q < 1$. Then, $\log \frac{X/N}{Y/M}$ is distributed approximately as a gaussian with mean $$\log \frac pq$$ and variance $$\frac{1}{N}\left(\frac{1}{p} - 1\right) + \frac{1}{M}\left(\frac 1q - 1\right).$$
\end{lemma}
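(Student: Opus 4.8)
The plan is to establish asymptotic normality of $\log(X/N)$ and $\log(Y/M)$ separately via the delta method, and then combine them using independence. First, recall that by the Central Limit Theorem (in its Moivre--Laplace form for Binomials), $\sqrt{N}\left(X/N - p\right)$ converges in distribution to a Gaussian with mean $0$ and variance $p(1-p)$, and similarly $\sqrt{M}\left(Y/M - q\right)$ converges to a Gaussian with mean $0$ and variance $q(1-q)$. Since $0 < p, q < 1$, the empirical frequencies $X/N$ and $Y/M$ concentrate around $p$ and $q$, which are bounded away from $0$, so the logarithm is well-defined with high probability and smooth in a neighborhood of these points.

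Next I would apply the delta method to the function $g(u) = \log u$, which has derivative $g'(u) = 1/u$. This gives that $\sqrt{N}\left(\log(X/N) - \log p\right)$ is approximately Gaussian with mean $0$ and variance $g'(p)^2 \cdot p(1-p) = \frac{1}{p^2}\cdot p(1-p) = \frac{1-p}{p} = \frac{1}{p} - 1$. Dividing through by $\sqrt{N}$, the random variable $\log(X/N)$ is approximately Gaussian with mean $\log p$ and variance $\frac{1}{N}\left(\frac{1}{p} - 1\right)$; symmetrically, $\log(Y/M)$ is approximately Gaussian with mean $\log q$ and variance $\frac{1}{M}\left(\frac{1}{q} - 1\right)$.

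Finally, since $X$ and $Y$ are independent, so are $\log(X/N)$ and $\log(Y/M)$, and the difference of independent (approximately) Gaussian random variables is (approximately) Gaussian with the mean equal to the difference of means, namely $\log p - \log q = \log\frac{p}{q}$, and the variance equal to the sum of variances, namely $\frac{1}{N}\left(\frac{1}{p} - 1\right) + \frac{1}{M}\left(\frac{1}{q} - 1\right)$. This is exactly the claimed statement. To make the argument rigorous rather than heuristic, one can invoke a quantitative delta-method bound (e.g.\ a Berry--Esseen-type estimate combined with a Lipschitz control of $\log$ on the high-probability event that $X/N$ and $Y/M$ lie in a fixed compact subinterval of $(0,1)$), which controls the approximation error in terms of $N$ and $M$.

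The main obstacle is handling the logarithm near the boundary: $\log u \to -\infty$ as $u \to 0^+$, and with positive (though exponentially small) probability $X = 0$, so $\log(X/N)$ is not even finite. The clean way around this is to condition on the event $\{X/N \in [p/2, \,(1+p)/2]\} \cap \{Y/M \in [q/2,\,(1+q)/2]\}$, which by a Chernoff/Hoeffding bound has probability $1 - e^{-\Omega(\min(N,M))}$; on this event $\log$ is Lipschitz with a constant depending only on $p, q$, so the delta-method expansion is valid and the error terms are genuinely lower-order. Everything else is a routine application of the CLT and the delta method, so the word "approximately" in the statement should be read as "up to an error that vanishes as $N, M \to \infty$."
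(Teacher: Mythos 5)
Your proposal is correct and follows essentially the same route as the paper's proof: the paper also expands $\log \hat p$ in a Taylor series around $p$ (i.e.\ the delta method), invokes the CLT with Bernoulli variance $p(1-p)$, and then treats $\log \frac{X/N}{Y/M}$ as a difference of two approximately Gaussian terms. Your additional remarks on handling the boundary behavior of $\log$ and quantifying the error go beyond the paper's sketch but do not change the underlying argument.
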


\begin{proof}
Let us note that $X$ is a sum of $N$ Bernoulli random variables and denote $\hat p = X / N$. Consider Taylor expansion of $\log \hat p$ centered at $p$:
\begin{align*}
    \log \hat p = \log p + \frac{1}{p} (\hat p - p) + \dots
\end{align*}
Let us rearrange the terms and scale them by $\sqrt{N}$:
\begin{align*}
    \sqrt{N} (\log \hat p - \log p) = \frac{\sqrt{N}}{p} (\hat p - p) + \dots
\end{align*}

Using the Central Limit Theorem and recalling that the variance of a Bernoulli distribution is given by $p(1-p)$, $\sqrt{N}(\hat p - p)$ converges to $\mathcal{N}(0, p(1-p))$. We further note that the higher-order terms of the expansion converge to zero.
Thus, $\sqrt{N} (\log \hat p - \log p) \to \mathcal{N}(0, \frac{1-p}{p})$ which implies that $\log \hat p = \log \frac{X}{N}$ is approximately $\mathcal{N}\left(\log p, \frac{1-p}{Np}\right).
$
Finally, we note that $\log \frac{X/N}{Y/M}$ is a difference of two terms converging to gaussians meaning that $\log \frac{X/N}{Y/M}$ is approximately distributed as
$$
     \mathcal{N}\left(\log \frac pq, \frac{1-p}{Np} + \frac{1-q}{Mq}\right).
$$

\end{proof}

Now we are ready to formulate the first result on the expected number of spurious tokens:

\begin{proposition}\vk{This is missing some statement about which class we are trying to flip. Spurious with respect to which class? Please add it to the proof, saying something like "without loss of generality, we look at flip from zero to one..."}\en{OK. now it is in subsection A.1.2 and in the proof}
Suppose that we have a vocabulary of $V_u > 0$ uninformative tokens, with the occurrence probability $p_{l0} = p_{l1}$ of each token $l$ at most $p$. Define a token to be spurious if the empirical log-ratio of the probabilities is greater than a threshold $\gamma$. Then the expected number of spurious tokens is at least 
$$ V_u \left(1 - \Phi(\gamma / \sigma(p))\right),$$
where $\sigma^2(p) = \frac{1}{D}\left(\frac{1}{p} - 1\right)$ and $\Phi$ is a CDF of a standard gaussian. \en{is it ok that here we are also making a statement about a bound on the expectation of the number of spurious tokens? P.S. the high probability statement on the number is also possible, but going to be a little bit ugly}

\end{proposition}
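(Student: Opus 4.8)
The plan is to reduce the claim to the Gaussian approximation supplied by the lemma above, followed by linearity of expectation. Without loss of generality I would fix attention on tokens spuriously correlated with class $1$ (the other case being symmetric), so that a token $l$ is called spurious precisely when $\delta_l = w_{l1} - w_{l0} = \log\frac{D_{l1}/D_1}{D_{l0}/D_0} > \gamma$. The first step is to pin down the distribution of the relevant counts: by the data-generating process, conditionally on the class totals $D_0,D_1$, each of the $D_y$ tokens drawn in class $y$ equals token $l$ independently with probability $p_{ly}$, so $D_{ly} \sim \mathrm{Bin}(D_y, p_{ly})$; moreover $D_{l0}$ and $D_{l1}$ are independent because they aggregate over the disjoint (independently generated) class-$0$ and class-$1$ data points.

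Next I would invoke the lemma with $(X,N,p) = (D_{l1}, D_1, p_{l1})$ and $(Y,M,q) = (D_{l0}, D_0, p_{l0})$, giving that $\delta_l$ is approximately $\mathcal{N}\!\left(\log\frac{p_{l1}}{p_{l0}},\, \sigma_l^2\right)$ with $\sigma_l^2 = \frac{1}{D_1}\!\left(\frac{1}{p_{l1}}-1\right) + \frac{1}{D_0}\!\left(\frac{1}{p_{l0}}-1\right)$. Since $l$ is uninformative, $p_{l0}=p_{l1}$ and the mean vanishes. Under the simplifying assumption $D_0 = D_1 = \frac12 D$ and using $p_{l0}=p_{l1}\le p$, I would lower-bound $\sigma_l^2 = \frac{4}{D}\!\left(\frac{1}{p_{l0}}-1\right) \ge \frac{4}{D}\!\left(\frac{1}{p}-1\right) \ge \sigma^2(p)$, and then, since $x\mapsto\Phi(\gamma/x)$ is decreasing for $\gamma>0$, conclude $\mathbb{P}[\delta_l > \gamma] = 1 - \Phi(\gamma/\sigma_l) \ge 1 - \Phi(\gamma/\sigma(p))$.

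Finally, writing the number of spurious tokens as $N = \sum_{l=1}^{V_u} \mathbf{1}[\delta_l > \gamma]$, linearity of expectation yields $\mathbb{E}[N] = \sum_{l=1}^{V_u} \mathbb{P}[\delta_l > \gamma] \ge V_u\bigl(1-\Phi(\gamma/\sigma(p))\bigr)$; equivalently, $N$ dominates in expectation a $\mathrm{Bin}\bigl(V_u, 1-\Phi(\gamma/\sigma(p))\bigr)$ random variable. (Note that only linearity is used, so the negative correlation among the $\delta_l$ coming from the within-document multinomial structure is irrelevant here, which is also why I would state only the expectation bound rather than a high-probability version.) I expect the main obstacle to be the rigor of the Gaussian approximation itself: the lemma is a delta-method/CLT statement valid in the regime of large $D_y$, so the proposition is inherently asymptotic (or approximate), and making it fully quantitative would require a Berry--Esseen-type bound for the Bernoulli sums together with explicit control of the Taylor remainder of $\log$; the remaining ingredients — identifying the count distributions, monotonicity of $\Phi$, and linearity of expectation — are routine.
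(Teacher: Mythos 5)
Your proposal is correct and follows essentially the same route as the paper's own proof: apply the delta-method lemma to conclude $\delta_l = \log\frac{D_{l1}/D_1}{D_{l0}/D_0}$ is approximately Gaussian with mean $0$ and variance $\sigma_l^2$, lower-bound $\sigma_l^2$ by $\sigma^2(p)$ under $D_0=D_1=\tfrac12 D$ and $p_{l0}=p_{l1}\le p$, deduce $\mathbb{P}[\delta_l>\gamma]\ge 1-\Phi(\gamma/\sigma(p))$, and sum over the $V_u$ tokens. Your added remarks (the variance is actually $\tfrac{4}{D}(\tfrac1p-1)$, so the stated bound is loose by a factor of $4$ in variance; only linearity of expectation is needed so correlations are irrelevant; full rigor would require a Berry--Esseen-type quantification of the approximation) are accurate refinements of points the paper leaves implicit.
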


\begin{proof}


Without loss of generality, we assume that a classifier assigns class 0 to a data point and the attacker is interested in changing the prediction to 1. To simplify the expressions, we assume that classes are balanced, i.e. $b_0 = b_1$, and we can ignore the bias term of a Naive Bayes (NB).
For an NB classifier, the weight of a token $l$ for a class $y$ is given by
    $$ w_{ly} = \log(\hat p_{ly}) = \log \frac{D_{ly}}{D_y}. $$
Formally, we consider a token being spurious if $\delta_l = w_{l1} - w_{l0} > \gamma$.
Recalling the assumption on the data generating process, we note that each $D_{ly}$ is distributed as a Binomial random variable $\mathrm{Bin}(D_y, p_{ly})$. 

We use Lemma 1 to argue that $\delta_l = \log \frac{D_{l1}/D_1}{D_{l0}/D_0}$ is approximately normal with mean $\log \frac{p_{l1}}{p_{l0}} = 0$ and variance $\sigma^2_l = \frac{1}{D_0}(\frac{1}{p_{l0}} - 1) + \frac{1}{D_1}(\frac{1}{p_{l1}} - 1)$. 
\vk{Can you add more comments on why? Also, did you get the chance to double check if the assumptions of that paper hold?}\en{OK. added the proof of the lemma}
Assuming for simplicity that total number of words in both classes match, i.e. $D_0 = D_1 = \frac 12 D$, and since $p_{l0} = p_{l1} \leq p$, the variance $\sigma^2_l$ is bounded below by 
    $$
    \sigma^2(p) = \frac{1}{D}\left(\frac{1}{p} - 1\right).
    $$

Considering the probability of a token being spurious, we get 
    $$
    \mathbb{P}[\delta_l > \gamma] = 1 - \Phi(\gamma / \sigma_l) \geq 1 - \Phi(\gamma / \sigma(p)).
    $$ 
Thus, the expectation of total number of spurious tokens is bounded below by the expectation of a Binomial random variable with the total number of trials $V_u$ and the probability of success $1 - \Phi(\gamma / \sigma(p))$. Hence, the expected number of spurious tokens is at least $V_u \left(1 - \Phi(\gamma / \sigma(p))\right)$.
\end{proof}

Also, we can show how many spurious tokens can be appended to a data point to change its class. We will first need a lemma characterizing bounds of maxima of i.i.d. centered gaussians.

\begin{lemma}
Let $X_1, \dots, X_n$ be i.i.d. random variables sampled from $\mathcal{N}(0, \sigma^2)$. Then, with probability $1 - \rho$, the following holds:
$$\max_{i \in [n]} X_i > \sigma \Phi^{-1}\left(\rho^\frac{1}{n}\right).$$
\end{lemma}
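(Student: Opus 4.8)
The plan is to reduce the statement to a direct computation with the distribution function of the maximum, exploiting the fact that for i.i.d.\ variables the CDF of $\max_i X_i$ factorizes. First I would fix an arbitrary threshold $t \in \mathbb{R}$ and write, using independence,
\[
\Pr\!\left[\max_{i \in [n]} X_i \le t\right] = \prod_{i=1}^{n} \Pr[X_i \le t] = \Phi(t/\sigma)^{\,n},
\]
where the last equality uses that each $X_i/\sigma$ is a standard normal variable with CDF $\Phi$.

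Next I would instantiate $t = \sigma\,\Phi^{-1}\!\big(\rho^{1/n}\big)$. This choice is well defined: for $\rho \in (0,1)$ we have $\rho^{1/n} \in (0,1)$, and $\Phi$ is a continuous, strictly increasing bijection from $\mathbb{R}$ onto $(0,1)$, so $\Phi^{-1}$ is defined on $(0,1)$ and $\Phi\big(\Phi^{-1}(\rho^{1/n})\big) = \rho^{1/n}$. Substituting this $t$ into the displayed identity gives $\Pr[\max_i X_i \le t] = \big(\rho^{1/n}\big)^{n} = \rho$, and hence, taking complements,
\[
\Pr\!\left[\max_{i \in [n]} X_i > \sigma\,\Phi^{-1}\!\big(\rho^{1/n}\big)\right] = 1 - \rho,
\]
which is exactly the claim. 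Continuity of the normal distribution makes $\Pr[\max_i X_i = t] = 0$, so the distinction between $>$ and $\ge$ is immaterial and the strict inequality in the conclusion holds with probability precisely $1-\rho$.

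As for difficulty: there is essentially no hard step here. The only points needing care are verifying that $\Phi^{-1}$ is applied to an argument inside its domain $(0,1)$, and noting the continuity remark above so that one obtains probability exactly (and hence at least) $1-\rho$. The same computation also yields the companion statement $\Pr[\max_i X_i \le \sigma\,\Phi^{-1}(\rho^{1/n})] = \rho$, which is the form actually invoked by the subsequent propositions on the number of concatenative insertions and on synonymous substitutions.
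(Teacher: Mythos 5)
Your proof is correct and follows essentially the same route as the paper: factor the CDF of the maximum using independence, express it as $\Phi(t/\sigma)^n$, and invert at $t=\sigma\,\Phi^{-1}(\rho^{1/n})$ to get probability exactly $1-\rho$. The extra remarks on the domain of $\Phi^{-1}$ and continuity of the normal distribution are fine but add nothing beyond the paper's argument.
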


\begin{proof}

Applying basic rules of probability, we get:
\begin{align*}
    \mathbb{P}[\max_{i \in [n]} X_i > x] 
    &= 1 - \mathbb{P}[\max_{i \in [n]} X_i \leq x] = \\
    &= 1 - \left(\mathbb{P}[X_1 \leq x]\right)^n = \\
    &= 1 - \left(\Phi\left(\frac{x}{\sigma}\right)\right)^n,
\end{align*}
where $\Phi$ is a CDF of a standard gaussian.
Denoting $\left(\Phi\left(\frac{x}{\sigma}\right)\right)^n = \rho$, we arrive to $x = \sigma \Phi^{-1}\left(\rho^\frac{1}{n}\right)$ and obtain the desired bound.

\end{proof}

\begin{proposition}
Suppose we have a vocabulary of $V_u > 0$ uninformative tokens, with the occurrence probability $p_{l0} = p_{l1}$ of each token $l$ at most $p$. Consider a data point of $L$ tokens and suppose that it contains at most $r \cdot L$ informative tokens with signal strength at most $\eta$. Then, with probability at least $1-\rho$, the number of adversarial insertions that is needed to be made to change the predicted class is at most
$$\left\lceil \frac{rL\eta}{\sigma(p) \Phi^{-1}(\rho^{\frac{1}{V_u}})}\right\rceil,$$
where $\sigma^2(p) = \frac{1}{D}\left(\frac{1}{p} - 1\right)$ and $\Phi$ is a CDF of a standard gaussian.
\end{proposition}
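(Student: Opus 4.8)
The plan is to reduce the Multinomial Naive Bayes prediction to a single linear threshold and then use the two lemmas on the spurious vocabulary to count how many spurious insertions are needed to overcome the classifier's margin in favor of class $0$. First I would observe that, with balanced classes (so $b_0 = b_1$ and the bias cancels), the classifier predicts class $1$ on a bag-of-words vector $x$ exactly when $\langle x, \delta\rangle > 0$, where $\delta$ has entries $\delta_l = w_{l1} - w_{l0} = \log(\hat p_{l1}/\hat p_{l0})$. Since by assumption the original data point $x$ is classified as $0$, we have $\langle x, \delta\rangle \le 0$; invoking the signal-strength assumption that each of the at most $rL$ informative tokens in $x$ contributes at most $\eta$ in magnitude to the log-odds (and that the uninformative tokens already in $x$ carry essentially no signal), the amount by which the score favors class $0$ is at most $rL\eta$.

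Next I would locate one strongly spurious token. Each uninformative token $l$ has $\delta_l = \log\frac{D_{l1}/D_1}{D_{l0}/D_0}$, and by Lemma 1 (applied exactly as in Proposition 1) this is approximately $\mathcal{N}(0,\sigma_l^2)$ with $\sigma_l^2 \ge \sigma^2(p) = \frac1D(\frac1p - 1)$. Applying Lemma 2 to the $V_u$ variables $\{\delta_l\}_{l \text{ uninformative}}$ — and noting that inflating the variances from $\sigma^2(p)$ to $\sigma_l^2$ only makes the maximum stochastically larger, so that for positive $x$ we still have $\mathbb{P}[\max_l \delta_l \le x] \le \Phi(x/\sigma(p))^{V_u}$ — gives that with probability at least $1-\rho$ there is an uninformative token $l^\star$ with $\delta_{l^\star} > M := \sigma(p)\,\Phi^{-1}(\rho^{1/V_u})$.

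Then I would assemble the attack: on the event just described, append $k := \lceil rL\eta / M\rceil$ copies of $l^\star$ to $x$, obtaining $x'$. This increases the score by $k\,\delta_{l^\star} \ge kM \ge rL\eta$, which cancels the margin in favor of class $0$ and yields $\langle x', \delta\rangle \ge 0$, flipping the prediction (the boundary case is harmless since $kM$ strictly exceeds $rL\eta$ whenever $rL\eta/M$ is not an integer, and can otherwise be absorbed into the constant). Hence with probability at least $1-\rho$ the number of adversarial insertions needed is at most $\lceil rL\eta/(\sigma(p)\,\Phi^{-1}(\rho^{1/V_u}))\rceil$, which is the claimed bound.

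I expect the main obstacle to be making the first step fully rigorous: the proposition effectively uses $\eta$ as an \emph{upper} bound on the per-token signal of the informative tokens present in the data point (so the class-$0$ margin is at most $rL\eta$), whereas the model introduces $\eta$ as a \emph{lower} bound on informative-token signal, and it also silently neglects the fluctuating contribution $\sum_{l \text{ uninf}} x_l \delta_l$ of the uninformative tokens already in $x$, whose empirical log-ratios vanish only in expectation. Consistent with the level of rigor of the earlier propositions, I would present these as modeling simplifications; a fully rigorous treatment would additionally require a concentration bound on that sum. The remaining steps are routine consequences of Lemmas 1 and 2.
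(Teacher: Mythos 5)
Your proposal is correct and takes essentially the same route as the paper's proof: bound the class-0 margin by $rL\eta$, use Lemma 1 to treat each uninformative $\delta_l$ as approximately a zero-mean Gaussian with variance at least $\sigma^2(p)$, and apply Lemma 2 to obtain, with probability at least $1-\rho$, a spurious token with $\delta_l > \sigma(p)\,\Phi^{-1}(\rho^{1/V_u})$, so that roughly $rL\eta/(\sigma(p)\Phi^{-1}(\rho^{1/V_u}))$ insertions flip the prediction. The only cosmetic difference is that you append $T$ copies of the single best token, while the paper informally takes the top $T$ distinct tokens (justified by $T \ll V_u$); both reduce to the same inequality $T\max_l \delta_l > rL\eta$, and the caveats you flag (using $\eta$ as an upper bound on signal strength, ignoring the uninformative tokens already in $x$) are simplifications the paper itself makes.
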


\begin{proof}

We call signal strength of a token $l$ the difference of scores $w_{l0} - w_{l1}$ the NB classifier assigns towards class 0. 
Under the assumptions of the proposition, the total signal strength of a data point is bounded by $r \cdot L \cdot \eta$.

Denoting the total number of insertions by $T$, we are interested in making $T$ insertions such that they result in changing the prediction of a classifier from 0 to 1. \vk{Need to define total signal strength a bit better. Just say that we call signal strength the score of a correctly classifier datapoint and it is bounded by that expression under our assumptions} \en{OK. added above}
Recalling the result from the proof of Proposition 1, $\delta_l = w_{l1} - w_{l0}$ is distributted approximately as a zero mean gaussian with variance at least $\sigma^2(p) = \frac{1}{D}\left(\frac{1}{p} - 1\right)$. 
Since we are interested in changing the class with minimum editions possible, we take tokens with maximum $\delta_l$ among the $V_u$ possible choices. Using Lemma 2, and noting that $\delta_l$ are zero mean gaussians with variance at least $\sigma^2(p)$, we have $$\mathbb{P}[\max_{l \in [V_u]} \delta_l > \sigma(p) \Phi^{-1}(\rho^{\frac{1}{V_u}})] \geq 1 - \rho.$$
    \vk{This seems to be missing a probability for this bound.. I'm a bit confused here...}\en{OK. changed to a bound on maximum of gaussians, not its expectation}
    
Since the number of insertions $T$ is much less compared to the size of uninformative vocabulary $V_u$, we can assume that $T$ tokens with biggest $\delta_l$ are also bounded below by $\sigma(p) \Phi^{-1}(\rho^{\frac{1}{V_u}})$ with probability at least $1 - \rho.$
Thus, if we take $$T = \left\lceil \frac{rL\eta}{\sigma(p) \Phi^{-1}(\rho^{\frac{1}{V_u}})}\right\rceil,$$
    we have
    $$\mathbb{P}[T \max_{l \in [V_u]} \delta_l > rL\eta] \geq 1 - \rho.$$

\end{proof}

\subsubsection{Susceptibility to Synonymous Adversarial Examples}

Next, we consider the setting of synonymous adversarial examples.

\begin{proposition}
Consider a data point of $L$ tokens and suppose that it contains at most $r \cdot L$ informative tokens with signal strength at most $\eta$. Suppose further that each of the rest $(1-r) \cdot L$ tokens is non-informative and has $S$ non-informative synonym tokens with the occurrence probability $p_{l0} = p_{l1}$ for each token $l$ or its synonym at most $p$. 
Then, given a budget $T$ of adversarial changes, we can swap $T$ non-informative tokens to their synonyms so the predicted class is changed with probability at least
$$\sum_{t=T}^{(1-r)L} \binom{(1-r)L}{t} \left(1-\phi\right)^t \phi^{(1-r)L-t},$$
where $\phi = \left(\Phi\left(\frac{rL\eta}{\sqrt{2}T\sigma(p)}\right)\right)^S$, $\sigma^2(p) = \frac{1}{D}\left(\frac{1}{p} - 1\right)$ and $\Phi$ is a CDF of a standard gaussian.
\vk{Can you please convert the combinatorial into powers, e.g. L choose t $\approx L^t$?}\en{it doesn't make much sense to convert binomial coefficients themselves into powers. Instead, one might want to examine the behavior of CDF of the binomial r.v.}
\vk{Is it also possibly to simply the sum into some sort of bound? Ideally, we are just interested in a bound on your expression that has the same big-Oh } \en{I have provided an alternative formulation in Proposition 4. Please read TODO on the right first.}

\end{proposition}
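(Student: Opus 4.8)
The plan is to reduce the statement to a question about the maximum of $S$ i.i.d.\ zero-mean Gaussians, one for each candidate synonym, and then count how many of the $(1-r)L$ non-informative tokens in the data point admit a synonym with a large enough spurious log-ratio, so that swapping $T$ of them accumulates enough signal to overcome the $rL\eta$ bound on the true class score. First I would recall from the proof of Proposition 1 (via Lemma 1) that for a non-informative token $l$ (or one of its synonyms) the empirical log-ratio $\delta_l = w_{l1}-w_{l0}$ is approximately $\mathcal N(0,\sigma_l^2)$ with $\sigma_l^2 \ge \sigma^2(p) = \tfrac1D(\tfrac1p-1)$, under the balanced-class simplification $D_0=D_1=\tfrac12 D$. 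For a fixed token with $S$ synonyms, the best achievable gain from replacing it is $\max_{s\in[S]}\delta_s$; by the computation in Lemma 2 (the CDF-of-the-max argument), $\mathbb P[\max_{s\in[S]}\delta_s \le x] = \Phi(x/\sigma_l)^S \le \Phi(x/\sigma(p))^S$ when $x>0$.

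Next I would fix the per-swap target $x^\star = \tfrac{rL\eta}{\sqrt2\,T\,\sigma(p)}$ — chosen so that if $T$ swaps each contribute at least $x^\star\sigma(p)\cdot\sqrt2$... more precisely, so that $T$ swaps each of magnitude $\ge rL\eta/T$ suffice to beat the total signal $rL\eta$ of the data point. Actually the cleanest route: call a non-informative token in the data point \emph{useful} if at least one of its $S$ synonyms has $\delta_s$ large enough that using all $T$ such swaps overcomes $rL\eta$; with the stated $\phi = \Phi\!\big(\tfrac{rL\eta}{\sqrt2 T\sigma(p)}\big)^S$, the probability that a given token is \emph{not} useful (all $S$ synonyms fall below the per-swap threshold) is at most $\phi$, so it is useful with probability at least $1-\phi$. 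These events are independent across the $(1-r)L$ non-informative token positions since they involve disjoint sets of synonym tokens (using the large-vocabulary assumption that all synonyms are distinct uninformative tokens). Hence the number of useful tokens stochastically dominates a $\mathrm{Bin}\big((1-r)L,\,1-\phi\big)$ random variable, and the attack succeeds whenever this count is at least $T$, which has probability at least $\sum_{t=T}^{(1-r)L}\binom{(1-r)L}{t}(1-\phi)^t\phi^{(1-r)L-t}$ — exactly the claimed bound.

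The main obstacle I anticipate is making the per-swap threshold bookkeeping rigorous: one needs the $\sqrt2$ factor in $\phi$ to come out correctly, which requires carefully tracking that $\delta_l$ for a \emph{synonym swap} is the difference of the incoming token's log-ratio and the outgoing token's log-ratio, hence has variance $2\sigma^2(p)$ rather than $\sigma^2(p)$ (this is the source of the $\sqrt2$, via $\Phi(x/(\sqrt2\sigma(p)))$). A secondary subtlety is the independence claim across token positions and across the $S$ synonyms — this relies on the idealized assumption that distinct tokens and distinct synonyms correspond to independent binomial counts, which holds approximately under the multinomial data-generating model when $V$ is large relative to $L$ and $D$. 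I would also need to be slightly careful that the Gaussian approximation from Lemma 1 is invoked uniformly over all uninformative tokens, which is fine to state at the same level of rigor as Propositions 1–2. Everything else is a routine assembly of Lemma 1, Lemma 2, and a binomial tail.
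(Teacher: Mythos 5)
Your proposal is correct and follows essentially the same route as the paper: treat each swap's gain as the difference $\Delta_{ls}=\delta_s-\delta_l$ of two approximately Gaussian log-ratios (hence variance at least $2\sigma^2(p)$, the source of the $\sqrt{2}$), use the CDF-of-the-maximum argument over the $S$ synonyms to get per-token success probability at least $1-\phi$ with per-swap threshold $rL\eta/T$, and finish with the binomial tail over the $(1-r)L$ non-informative positions. Your explicit remarks on independence across positions are, if anything, more careful than the paper's proof, which simply asserts the binomial count.
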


\begin{proof}

By the assumption, signal strength $w_{l0} - w_{l1}$ of a token $l$ is bounded by $\eta$ resulting in total signal strength at most $r \cdot L \cdot \eta$. Given a budget of $T$ changes, we are interested in adversarial changes with $\sum_{l \in [T]} \delta_l> r L\eta$. 
Let us denote the change of the signal strength by switching one of the $(1-r)\cdot L$ non-informative tokens $l$ to one of its synonyms $s$ by $\Delta_{ls} = \delta_s - \delta_l = w_{s1} - w_{s0} - w_{l1} + w_{l0} = \log \frac{D_{s1} \cdot D_{l0}}{D_{s0} \cdot D_{l1}}$. 

Similarly to the previous proofs, we note that $\Delta_{ls}$, being approximately a difference of two gaussians, is approximately a gaussian with mean $\log \frac{p_{s1} \cdot p_{l0}}{p_{s0} \cdot p_{l1}} = 0$ \en{to ensure that the log ratio = 0 we SHOULD either ask to change only non-informative words or ask $\frac{p_{s1}}{p_{s0}} = \frac{p_{l1}}{p_{l0}}$} and variance at least $2 \sigma^2(p)$. \vk{This needs to be much more rigorous. You can at least say something along the lines the mean is positive (?), and then when you compute the probability, you bound that event with the one in which the mean is zero} \en{probably my brain doesn't work as good right now, but i'm not sure how to make it more rigorous}

Since each token $l$ can be substituted by only one synonym, we examine the probability that a synonym with maximum prediction change $\Delta_{l} = \max_{s \in [S]}\Delta_{ls}$ will change the signal strength by at least $\frac{rL\eta}{T}$:
    \begin{align*}
        \mathbb{P}\left[\Delta_{l} > \frac{rL\eta}{T}\right] 
        &= \mathbb{P}\left[\max_{s \in [S]}\Delta_{ls} > \frac{rL\eta}{T}\right] = \\
        &= 1 - \mathbb{P}\left[\max_{s \in [S]}\Delta_{ls} \leq \frac{rL\eta}{T}\right] = \\
        &= 1 - \left(\mathbb{P}\left[\Delta_{ls} \leq \frac{rL\eta}{T}\right]\right)^S \geq \\
        &\geq 1 - \left(\Phi\left(\frac{rL\eta}{\sqrt{2}T\sigma(p)}\right)\right)^S = 1 - \phi.
    \end{align*}
    \vk{Should explain the high-level logic for what’s going on. Probability that max is greater than gamma is probability that at least one is greater than gamma, which is 1 minus other stuff...}\en{OK. made it a little more explicit. Note that the technique is the same as in the proof of Lemma 2.}
Finally, denoting $\xi \sim \mathrm{Bin}((1-r) L, 1 - \phi)$, the probability that there exist at least $T$ synonyms among $(1-r) L$ non-informative tokens with $\Delta_{l} > \frac{rL\eta}{T}$ is bounded below by 
    $$\mathbb{P}[\xi \geq T] = \sum_{t=T}^{(1-r) L} \binom{(1-r) L}{t} (1-\phi)^t \phi^{(1-r) L-t}.$$
    \vk{Again, per my comment need to simplify this to get the essential big-Oh rates.}
    
    \en{TODO: please take a look at the following thoughts regarding big-Oh and bounds}
    
    So the proposition gives us a lower bound on the probability of a "good" event $\mathbb{P}[\xi \geq T]$. We can rewrite it as $1 - \mathbb{P}[\xi \leq T-1]$ and apply one of the upper bounds on the tail of the distribution of a "bad" event $\mathbb{P}[\xi \leq T-1]$.
    
    For a Binomial random variable $X \sim \mathrm{Bin}(n, p)$, Hoeffding's inequality gives us the following tail bound:
    $$
        \mathbb{P}[X \leq k] \leq \exp\left(-2\frac{(np - k)^2}{n}\right).
    $$
    However, this bound holds for $k \leq np$. I don't know if this assumption holds or not for our parameters: $n=(1-r)L$ is a fairly big number compared to $k=T-1$, but there is not much we can say about $p=1-\phi=1 - \left(\Phi\left(\frac{rL\eta}{\sqrt{2}T\sigma(p)}\right)\right)^S$
    If we decide to believe that $k \leq np$ we can say that the bound gives us $\exp(-k^2)$ decay.
    However, if $k$ is greater than the expectation $np$, there are only results on bounding the right tail of the distribution (i.e. inequalities of the form $\mathbb{P}[X \geq k] \leq \dots$) which is fairly logical, but they are not really useful for us.
    
    \en{TODO: delete the thoughts later}
\end{proof}

\en{In case you think we can assume that the number of adversarial changes is less than the expectation of the binomial, here is an alternative statement of the proposition with a nice (and actually meaningful) big-Oh bound.}

\en{TODO: delete either Prop 3 or Prop 4 + Lemma 3}

\begin{lemma}[Hoeffding's bound]
Suppose we are given a Binomial random variable $X \sim \mathrm{Bin}(n, p)$. If $k \leq np$, then we have the following tail bound:
    $$
        \mathbb{P}[X \leq k] \leq \exp\left(-2\frac{(np - k)^2}{n}\right).
    $$
\end{lemma}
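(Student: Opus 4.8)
The plan is to recognize this as the lower-tail case of Hoeffding's inequality specialized to a Binomial variable, exploiting the decomposition of $X$ into bounded i.i.d.\ summands. First I would write $X = \sum_{i=1}^n X_i$ where the $X_i$ are i.i.d.\ Bernoulli$(p)$ random variables taking values in $\{0,1\}$, and center them by setting $Y_i = X_i - p$, so that $\mathbb{E}[Y_i] = 0$ and $-p \le Y_i \le 1-p$ almost surely; note the range of each $Y_i$ has length $1$. Because the hypothesis gives $k \le np$, I would put $\epsilon = np - k \ge 0$ and observe that the target event is exactly $\{X \le k\} = \{\sum_{i=1}^n Y_i \le -\epsilon\}$, converting the statement into a one-sided deviation bound for a centered sum.

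Next I would apply the Chernoff (exponential Markov) method to the lower tail. For any $t > 0$,
$$\mathbb{P}\Bigl[\sum_{i=1}^n Y_i \le -\epsilon\Bigr] = \mathbb{P}\bigl[e^{-t\sum_i Y_i} \ge e^{t\epsilon}\bigr] \le e^{-t\epsilon}\,\mathbb{E}\bigl[e^{-t\sum_i Y_i}\bigr] = e^{-t\epsilon}\prod_{i=1}^n \mathbb{E}\bigl[e^{-tY_i}\bigr],$$
where the inequality is Markov applied to the nonnegative variable $e^{-t\sum_i Y_i}$, and the final equality uses independence to factor the moment generating function. The crux of the argument, and the step I expect to be the main obstacle, is bounding the per-coordinate MGF $\mathbb{E}[e^{-tY_i}]$ via Hoeffding's lemma: for a mean-zero random variable $Y$ supported on an interval of length $b-a$, one has $\mathbb{E}[e^{sY}] \le \exp\bigl(s^2(b-a)^2/8\bigr)$ for all real $s$. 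I would prove this lemma inline by the standard convexity route: since $e^{sy}$ is convex, on $[a,b]$ it lies below the chord, so taking expectations and using $\mathbb{E}[Y]=0$ reduces $\log\mathbb{E}[e^{sY}]$ to a function $\psi(s)$ with $\psi(0)=\psi'(0)=0$ and $\psi''(s) \le (b-a)^2/4$, whence Taylor's theorem gives $\psi(s)\le s^2(b-a)^2/8$. Here $b-a = 1$, so each factor is at most $\exp(t^2/8)$, yielding $\mathbb{P}[\sum_i Y_i \le -\epsilon] \le \exp\!\bigl(-t\epsilon + nt^2/8\bigr)$.

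Finally I would optimize the free parameter $t > 0$. Minimizing the exponent $-t\epsilon + nt^2/8$ gives $t^\star = 4\epsilon/n$, and substituting back produces exponent $-2\epsilon^2/n$. Recalling $\epsilon = np - k$, this is exactly
$$\mathbb{P}[X \le k] \le \exp\!\left(-2\frac{(np-k)^2}{n}\right),$$
which is the claimed bound; the condition $k \le np$ is precisely what guarantees $\epsilon \ge 0$ and hence that the optimal $t^\star$ is positive, keeping the Chernoff step valid. Everything after Hoeffding's lemma is routine calculus, so the only real content to write carefully is the convexity-plus-Taylor estimate of the single-summand MGF.
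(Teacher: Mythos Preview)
Your proposal is correct: it is the standard Chernoff--Hoeffding argument, decomposing the Binomial into i.i.d.\ bounded summands, invoking Hoeffding's lemma on each factor, and optimizing the exponential tilt. The paper itself does not prove this lemma at all; it is stated without proof as a well-known result (labelled ``Hoeffding's bound'') and then applied directly in the subsequent proposition, so your write-up actually supplies more detail than the original.
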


Using the bound, we characterize the probability of failing the attack as a function of changes to synonyms.

\begin{proposition}
Consider a data point of $L$ tokens and suppose that it contains at most $r \cdot L$ informative tokens with signal strength at most $\eta$. Suppose further that each of the rest $(1-r) \cdot L$ tokens is non-informative and has $S$ non-informative synonym tokens with the occurrence probability $p_{l0} = p_{l1}$ for each token $l$ or its synonym at most $p$. 
Then, given a budget $T$ of adversarial changes, after performing the attack by swapping $T$ non-informative tokens to their synonyms the predicted class will not change with probability at most
$$O(\exp(-T^2)).$$

\en{though it feels natural to make it dependent on, say, $rL\eta$, it is fairly hard.}
\end{proposition}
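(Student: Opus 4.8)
The plan is to derive Proposition~4 directly from Proposition~3 together with Lemma~3 (Hoeffding's bound), by converting the ``success'' lower bound of Proposition~3 into an ``failure'' upper bound via complementation and then applying a binomial lower-tail estimate. Proposition~3 exhibits a random variable $\xi \sim \mathrm{Bin}\big((1-r)L,\,1-\phi\big)$ with $\phi = \big(\Phi(rL\eta/(\sqrt 2\,T\sigma(p)))\big)^S$ such that the attack changes the predicted class with probability at least $\mathbb{P}[\xi \ge T]$; hence the attack \emph{fails} with probability at most $\mathbb{P}[\xi \le T-1]$, and it remains to bound this lower tail by $O(\exp(-T^2))$.

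First I would set $\mu := \mathbb{E}[\xi] = (1-r)L(1-\phi)$ and pass to a $T$-independent lower bound on it: since $\Phi$ is increasing and $T \ge 1$, we have $\phi \le \big(\Phi(rL\eta/(\sqrt 2\,\sigma(p)))\big)^S =: \phi_{\max} < 1$, so $1-\phi \ge 1-\phi_{\max}$ and therefore $\mu \ge (1-r)L(1-\phi_{\max}) =: \mu_0 > 0$, a constant not depending on $T$. Working in the regime $T-1 \le \mu/2$ (which, because $\mu \ge \mu_0$, is in particular guaranteed for $T \le 1 + \mu_0/2$), Lemma~3 applies with $n = (1-r)L$, $p = 1-\phi$, and $k = T-1 \le \mu = np$, giving
\[
\mathbb{P}[\xi \le T-1] \;\le\; \exp\!\left(-\,\frac{2\,(\mu-(T-1))^2}{(1-r)L}\right).
\]
Since $T-1 \le \mu/2$ forces $\mu-(T-1)\ge \mu/2 \ge T-1$, the exponent is at most $-2(T-1)^2/((1-r)L)$, so the failure probability is at most $\exp\!\big(-2(T-1)^2/((1-r)L)\big)$; treating $r$ and $L$ as fixed this is $O(\exp(-T^2))$ (more precisely $\exp(-\Omega(T^2))$ with hidden constant $2/((1-r)L)$, uniform over the admissible range of $T$). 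One could equivalently run a multiplicative Chernoff bound on the lower tail and obtain the same conclusion.

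The main obstacle I anticipate is the range restriction baked into Lemma~3: the lower-tail Hoeffding estimate is only informative while $k = T-1$ stays below the mean $\mu$, so the $O(\exp(-T^2))$ conclusion is genuinely valid only for $T$ up to a constant fraction of $\mu$ (equivalently of $(1-r)L$), and becomes vacuous once $T$ approaches $(1-r)L$. The cleanest remedy is to state the proposition with this regime made explicit, e.g.\ ``for $T \le c\,(1-r)L$'' with $c$ determined by $\eta,\sigma(p),S$ through $\phi_{\max}$. A secondary subtlety is that $\phi$ itself depends on $T$ through the $1/T$ inside $\Phi$; the step above handles this by bounding $\phi \le \phi_{\max}$ before invoking concentration, but one could instead keep the exact $\phi(T)$ and simply note that it is decreasing in $T$, so using $\phi_{\max}$ only weakens the bound. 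Finally, if one wanted the exponent to scale with the signal level $rL\eta$ rather than with $T$ (as flagged in the TODO), Hoeffding alone does not seem to suffice: one would have to quantify how $1-\phi(T)$ grows with $T$ and trade it off against the tail, which I expect to be the delicate part and would leave as a remark rather than fold into this proof.
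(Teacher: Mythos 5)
Your proposal is correct and follows essentially the same route as the paper: complement the success bound of Proposition~3 and apply Lemma~3 (Hoeffding) to the lower tail of $\xi \sim \mathrm{Bin}\big((1-r)L,\,1-\phi\big)$. Your added regime restriction $T-1 \le \mu/2$ (via the $T$-independent bound $\phi \le \phi_{\max}$) actually makes the final $\exp(-\Omega(T^2))$ step explicit, whereas the paper asserts $O(\exp(-T^2))$ directly after Hoeffding under only the condition $T-1 \le (1-r)L(1-\phi)$.
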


\begin{proof}

By the assumption, signal strength $w_{l0} - w_{l1}$ of a token $l$ is bounded by $\eta$ resulting in total signal strength at most $r \cdot L \cdot \eta$. Given a budget of $T$ changes, we are interested in adversarial changes with $\sum_{l \in [T]} \delta_l> r L\eta$. 
Let us denote the change of the signal strength by switching a token $l$ to one of its synonyms $s$ by $\Delta_{ls} = \delta_s - \delta_l = w_{s1} - w_{s0} - w_{l1} + w_{l0} = \log \frac{D_{s1} \cdot D_{l0}}{D_{s0} \cdot D_{l1}}$. 

Similarly to the previous proofs, we note that $\Delta_{ls}$, being approximately a difference of two gaussians, is approximately a gaussian with mean $\log \frac{p_{s1} \cdot p_{l0}}{p_{s0} \cdot p_{l1}} = 0$ and variance at least $2 \sigma^2(p)$. 

Since each token $l$ can be substituted by only one synonym, we examine the probability that a synonym with maximum prediction change $\Delta_{l} = \max_{s \in [S]}\Delta_{ls}$ will change the signal strength by at least $\frac{rL\eta}{T}$:
    \begin{align*}
        \mathbb{P}\left[\Delta_{l} > \frac{rL\eta}{T}\right] 
        &= \mathbb{P}\left[\max_{s \in [S]}\Delta_{ls} > \frac{rL\eta}{T}\right] = \\
        &= 1 - \mathbb{P}\left[\max_{s \in [S]}\Delta_{ls} \leq \frac{rL\eta}{T}\right] = \\
        &= 1 - \left(\mathbb{P}\left[\Delta_{ls} \leq \frac{rL\eta}{T}\right]\right)^S \geq \\
        &\geq 1 - \left(\Phi\left(\frac{rL\eta}{\sqrt{2}T\sigma(p)}\right)\right)^S = 1 - \phi.
    \end{align*}
    
\en{up to now it was the same as proof of Prop 3}

Finally, we provide an upper bound on the probability that the attacker will fail, i.e. that there exist less than $T$ synonyms among $(1-r) L$ non-informative tokens with $\Delta_{l} > \frac{rL\eta}{T}$. 
Denoting $\xi \sim \mathrm{Bin}((1-r) L, 1 - \phi)$ and assuming that $T-1 \leq (1-r) L(1 - \phi)$, we use Hoeffding's inequality to get
    \begin{align*}
        \mathbb{P}[\xi < T] 
        &= \mathbb{P}[\xi \leq T-1] \leq \\ 
        &\leq \exp\left(-2\frac{((1-r) L (1-\phi) - (T-1))^2}{(1-r)L}\right) = \\
        &= O(\exp(-T^2)).
    \end{align*}
\end{proof}

\en{TODO: add interpretation: what are we trying to say with these 3 propositions}

\en{TODO: Finally, we note that in natural language words are distributed according to a power law (cite something), implying that $\sigma^2_l$ will grow for more rare words that in practice will allow further decrease the number of exchanges to synonyms in order to change the predicted class.}

\en{TODO: mention that these effects might be partly addressed by regularization, but for the sake of argument...}

\newpage
\section{Model Details}
\paragraph{Naive Bayes} This linear model has a long history in text classification and it is still popular for its simplicity. We convert each document into a bag-of-words representation, and following \cite{wang2012baselines}, we binarize the word features and use a multinomial model for classification. 
\paragraph{Long short-term memory} Long-short term memory (LSTM; \cite{hochreiter1997long}) is widely used in sequence modeling.  We built a single-layer LSTM with 512 hidden units as in \cite{zhang2015character}. The input to the LSTM is first transformed to a 300-dimensional vector using pretrained \texttt{word2vec} embeddings \cite{mikolov2013efficient}. We then average the outputs of the LSTM at each timestep to obtain a feature vector for a final logistic regression to predict the sentiment. 
\paragraph{Shallow word-level convolutional networks} An alternative approach to text classification are convolutional neural networks (CNNs; \cite{kim2014convolutional}). We train a CNN with an embedding layer (as in the LSTM) a temporal convolutional layer, followed by max-pooling over time, and a fully connected layer for classification. We use a uniform filter size of 3 in each convolutional feature map; all other settings are identical to those of \cite{kim2014convolutional}.
\paragraph{Deep character-level convolutional networks} We implement the 9-layer character-level network of \cite{conneau2016very}, which includes 4 stages. Each stage has 2 convolutional layers with batch normalization and 1 max-pooling layer; convolutional and pooling layers have strides of 1 and 2, respectively and filters of size 3. We start with 64 feature maps, and double the amount after each pooling step, concluding with k-max pooling layer with $k=8$. The resulting activations in $\mathbb{R}^{4096}$ are classified by 3 fully connected layers.

We achieve accuracies close to 95\% on the popular Yelp dataset, which is the same as performance reported in the papers introducing the models. The current state of the art ~\cite{zhang2017soa} is close to 97\%, while the 2016 state of the art was 96\%. On the widely used IMDB sentiment analysis dataset \cite{imdbPaper} (whose results were not included due to similarity to the Yelp task), we obtain accuracies of 92-93\%, the current state of the art being close to 96\%. On the spam detection task, our models achieve almost perfect accuracy. For the Fake News detection task, there is currently no standard dataset or benchmark of results, but we obtain models that perform quite well, with over 90\% accuracy in all cases. Thus, we demonstrate that our attacks are effective against very sophisticated classification algorithms and strong models.

\section{Genomics Experiment Details}
\paragraph{Dataset:} We use the Ensembl Biomart \cite{biomart} database to download the set of exons for the mouse species (GRCm8.p6). For each exon, we obtain its valid reading frame, thus understanding how the nucleotides are broken up into codons (nucleotide subsequences of length 3). Our training set is of size 100,000 and our test set is of size 10,000. In our experiments, we work with exons of length up to 400 (>90\% of all exons), thus each exon is a sequence of up to 400 symbols, each being one of A,G,T,C.

\paragraph{Experiment Setup:}
Each codon has a set of synonymous substitutions possible (i.e., 0-6 other codons that would result in the same protein, if substituted in its place); we use this to define our distance function. Thus, for each codon we simply consider up to $N=6$ candidates for it to be replaced with. We use $\tau = 0.9$, $\delta=0.5$ as early stopping criteria. Beam size $b=1$ proved sufficient for this domain. For the Naive Bayes model, the features used were the counts of each possible 4-length substring present in the exon (thus, we had $4^4=256$ features.

\section{Natural Language Experiment Details}
\paragraph{Datasets:} We study adversarial examples on three natural language classification tasks
, summarized in Table~\ref{summarization-dataset}
. We hold out 10\% of the training set for validation. The generation and evaluation of adversarial examples is done on the test set. 

\textbf{(i) Spam filtering:} The  {TREC 2007 Public Spam Corpus} (\textit{Trec07p}) contains 50,199 spam emails and 25,220 ham (non-spam) emails. We preprocess the data by removing all meta data and HTML tags. There is no standard split for this dataset, so we randomly pick 10\% as a test set.

\textbf{(ii) Sentiment analysis:} The Yelp Review Polarity dataset (\textit{Yelp}; \cite{zhang2015character}) consists of almost 600,000 customer reviews from Yelp, covering primarily restaurant reviews. Each review is labeled as either positive or negative.

\textbf{(iii) Fake news detection:} The News dataset \cite{mcintire2017news} contains 6,336 articles scraped from online sources, and includes both real and fake news. Each article contains a headline and body text (which we concatenated before classification) and is associated with a binary label.

\begin{table}[h]
\centering
\begin{tabular}{lccrr}
\toprule
Dataset       & Task       & \#Train & \#Test \\ \midrule
Trec07p 	  & Spam filtering       & 67.9k     & 7.5k \\
Yelp   & Sentiment analysis & 560k    & 38k    \\
News  & Fake news detection & 5.3k & 1.0k   \\
\end{tabular}
\caption{Summary of datasets and tasks}
\label{summarization-dataset}
\end{table}

\paragraph{Experiment Setup:}
We select the optimization settings that led to a reasonable tradeoff between the strength and the coherence of the adversarial examples.
Specifically, in all experiments, we use a target of $\tau = 0.9$. We set the syntactic bound to $\gamma_2=2$ nats for sentiment analysis and fake news detection. For the spam detection task, we set $\gamma_2 = \infty$, since the original spam messages are often malformed or ungrammatical and hence the language model constraint was no longer necessary.

We use $b = 1$ in our experiments, as widening the beam search did not give noticeable improvement in quality of the adversarial examples produced, while increasing the time required for generation; however, it is possible that a larger value of $b$ may give better results with different domains or classification algorithms. In particular, we expect it to improve performance on domains where the syntactic constraint has to be very strict, such as legal documents or financial news articles. We use $\delta = 0.5$.

For the syntactic constraints, we use a trigram language model \cite{kenlmPaper} trained on the training set of each task. We instantiate the semantic constraint using the word vectors of \cite{mrksic2016naacl}, to define the candidate set of substitutions for each word by finding the $N=20$ closest vectors in the embedding space, and further restricting substitutions only by words that have the same part-of-speech tag. These word vectors have been specially trained so that closeness in word vector space implies similar meaning, not just relatedness, and thus are appropriate for our purpose.

\paragraph{Human Evaluation:}
First, we subsample 100 random examples from the test set for each task, and ask human evaluators to assign labels (e.g., positive or negative sentiment for the Yelp task) to both the original data points, and their adversarially perturbed versions. We average the opinions of five different evaluations for each query, and find that human evaluators achieve similar accuracies on both types of inputs suggesting that our adversarial examples preserve key semantics sufficiently well to be recognizable by humans. Note that human accuracy generally falls below that of the algorithms: the fake news task is inherently difficult, while non-spam email is often misclassified since there is no standard definition for ``ham" emails; on sentiment analysis, both accuracies are within a reasonable margin of error, with there being close to 1\% difference between human performance and accuracy of our models.

Next, we ask human annotators to rate the ``writing quality" of the same set of examples on a scale of one to five, with five being the highest possible quality and likely generated by a human, and one being the lowest quality, likely generated by a machine. We see that humans tend to assign similar scores to both real and computer-generated samples. Although our adversarial examples are not perfectly formed, these results suggest that they were of comparable quality to the original examples (which also contain multiple spelling and grammar errors).

We further quantify the similarity of adversarial examples by presenting human evaluators with $500$ inputs for the sentiment analysis and fake news detection tasks, and their corresponding adversarial examples. We ask them to to rate the similarity of the adversarial examples to the originals on a scale of 1-5, with 1 being completely unrelated, and 5 being identical.

\subsection{Main Results}

\paragraph{Experimental Results.}
Table \ref{tab:results} shows the accuracy of each classification model on the three clean datasets as well as on adversarial inputs generated using Algorithm \ref{alg:opt}. We also report accuracies on randomly perturbed examples. 
Details regarding hyperparameters and implementation used are provided in Appendix C.

In our results, we observe that the average fraction of words substituted to be close to 10\%; the threshold of $\delta$ is just an early stopping criterion which is not often reached. Thus in practice, the examples we construct are quite similar to the original inputs and the majority of the words are unchanged.



\paragraph{Human Evaluation.}
We verify the quality and the coherence of our examples via human experiments on Amazon Mechanical Turk. More details about the exact setup of the experiments are provided in Appendix C.

As Table \ref{tab:human_accqual} shows, human evaluators achieve similar levels of success at classifying both the original and adversarial examples; they also assign both similar scores when asked to rate them (from 1-5) based on overall ``writing quality''. 

\begin{table}
\centering
\begin{tabular}{r | c  c  c}
\toprule
{\bf Input} & {\bf Trec07p} & {\bf Yelp} & {\bf News}\\ \midrule
CLN Acc& 87\% & 93\% & 64\% \\
ADV Acc& 93\% & 87\% & 58\%\\
\midrule
CLN Quality& 2.64 & 2.37 & 2.72 \\
ADV Quality& 2.75 & 2.38 & 2.47 \\
\bottomrule
\end{tabular}
\caption{Human accuracy and quality evaluation on original and adversarial examples.}
\label{tab:human_accqual}
\end{table}

We measure the degree of the perturbations made, in Table \ref{tab:human_quality}. A large fraction of examples are rated highly, indicating that the perturbations introduced do not significantly change the texts. Note that in reality, humans would not be seeing the original and perturbed text together, and thus the adversarial examples that were rated worse in this experiment may actually be of good quality when seen alone. Thus this experiment measures only similarity, not quality of the examples. As Table \ref{tab:human_accqual} indicates, when not seen together, humans evaluated original and perturbed examples equally, and thus the overall quality is high. Thus \emph{at least} 20\% of the  examples are ``very high quality'' by any measure. Even a lower success rate would be quite significant, especially in domains such as spam classification where such an attack poses a serious threat to an email system.
 \begin{table}
\centering
\begin{tabularx}{6cm}{r | c c c c c}
\toprule
{\bf Score} & 1 & 2 & 3 & 4 & 5\\
\midrule
Yelp & 53 & 40 & 121 & 180 & 106 \\
News & 56 & 49 & 138 & 141 & 116 \\
\end{tabularx}
\caption{Human evaluation of similarity between adversarial and original inputs.}
\label{tab:human_quality}
 \end{table}

\paragraph{Error Analysis:}
We find that our adversarial examples exhibit three kinds of errors: syntactic, semantic, and factual. {\em Syntactic errors} are ungrammatical word substitutions; these include replacing "claim responsibility" with "petition responsibility" and "never before" to "never until"; the first error is due to multiple word meanings, while the latter is due to the words being unsynonymous (and far in the word vector space). {\em Semantic errors} arise when the meaning of a sentence is altered. Most often, this is due to multiple word senses  --- e.g., "isis shooting" to "isis filming" --- or due to word embedding errors --- e.g., "isis ceasefire". Both these kinds of errors may be reduced by using more powerful word vectors --- e.g., using multi-sense word embeddings. {\em Factual errors} are a special case especially in Fake News when the sentence becomes obviously false, e.g. "Monday, March 16" to "Thursday, March 16", or "republican Trump" to "republican Obama". These may be remedied, e.g. by performing Named Entity Recognition.

\subsection{Comparison Against Simpler Methods of Attack}

To demonstrate the improvement in text quality of examples, we compare against a similar adversarial attack proposed by ~\citet{papernot}. They propose an iterative algorithm to generate perturbed inputs to an LSTM classification model, where in each iteration one word from the input is substituted with another word that minimizes the score assigned by the model to the correct class label. The primary difference in the two approaches is due to the semantic and syntactic constraints in our algorithm, which ensures that the resulting example is high quality and closely resembles the original input. The method being compared against has no such considerations, and thus does not guarantee similarity between the original input and perturbed example. Further, it relies on knowing the gradients of the model, and thus is not a black-box adversarial attack, unlike our algorithm. We observe that our algorithm results in higher quality examples. Further, since at each iterative step we consider a limited set of candidates for substitutions rather than all words in our vocabulary, our attack is much faster.

\section{Comparison Against Other Methods of Generating Adversarial Examples}

\begin{figure}[h]
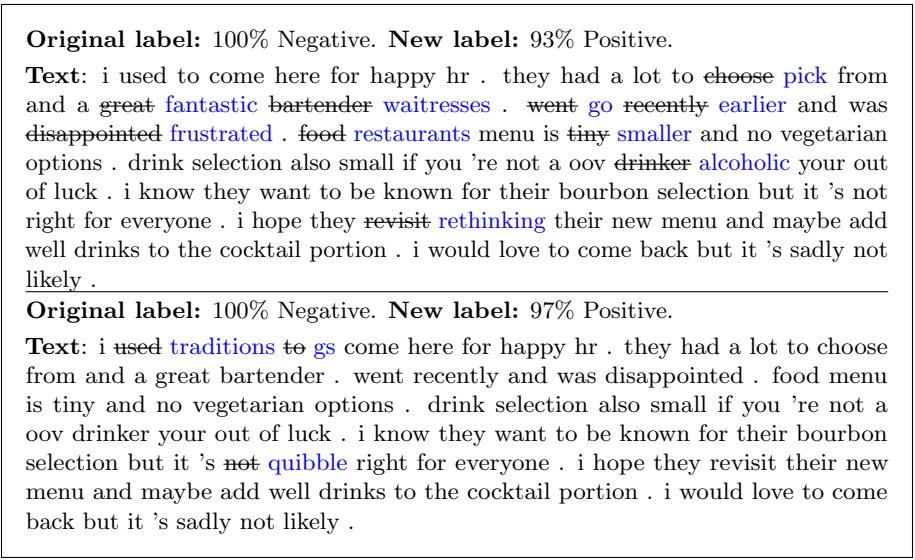

\label{fig:fig2}
\begin{framed}

\small{\textbf{Original label:} 100\% Negative. \textbf{New label:} 93\% Positive.}
\vspace{1mm}

\small{{\bf Text}: i used to come here for happy hr . they had a lot to \sout{choose} \substi{pick} from and a \sout{great} \substi{fantastic} \sout{bartender} \substi{waitresses} . \sout{went} \substi{go} \sout{recently} \substi{earlier} and was \sout{disappointed} \substi{frustrated} . \sout{food} \substi{restaurants} menu is \sout{tiny} \substi{smaller} and no vegetarian options . drink selection also small if you 're not a oov \sout{drinker} \substi{alcoholic} your out of luck . i know they want to be known for their bourbon selection but it 's not right for everyone . i hope they \sout{revisit} \substi{rethinking} their new menu and maybe add well drinks to the cocktail portion . i would love to come back but it 's sadly not likely . }

\hrule
\vspace{1mm}

\small{\textbf{Original label:} 100\% Negative. \textbf{New label:} 97\% Positive.}
\vspace{1mm}

\small{{\bf Text}: i \sout{used} \substi{traditions} \sout{to} \substi{gs} come here for happy hr . they had a lot to choose from and a great bartender . went recently and was disappointed . food menu is tiny and no vegetarian options . drink selection also small if you 're not a oov drinker your out of luck . i know they want to be known for their bourbon selection but it 's \sout{not} \substi{quibble} right for everyone . i hope they revisit their new menu and maybe add well drinks to the cocktail portion . i would love to come back but it 's sadly not likely . }
\end{framed}
\caption{Examples of adversarial inputs crafted by our algorithm (top), compared against the examples generated by the method proposed in \cite{papernot} (bottom)}
\end{figure}

\begin{figure}[h]
\label{fig:fig3}
\begin{framed}
\small{{\bf Text}: I have this place the best part of me The whole thing looked so a oov for my money away and made away by to say I get an extra }
\hrule
\vspace{2mm}
\small{{\bf Text}: When this comes down with a lot for your job We made up and stop up by far the same place for our first time this past oov was extremely}
\hrule
\vspace{2mm}
\small{{\bf Text}:If I have lived on this place but once I waited over for years people here always make it to help you have some drinks Ive gone over any other}
\end{framed}
\caption{Sample outputs from generator trained on Yelp dataset, as in \cite{naturalAEs}}
\end{figure}

We further compare against the approach described in \citet{naturalAEs}, where a generative model is used to construct misclassified examples similar to the original input in the semantic space
and identify to drawback to their method.
First, there is no way to leverage prior domain knowledge in the construction of the generator, to ensure that the right notion of similarity is learned in the semantic space. In our approach, these can easily be factored in - for example, in the experiments in Section ~\ref{sec:genetics}, we were able to use our domain knowledge to exactly define our semantic and syntactic constraints. The second drawback is that the generative models used do not produce any meaningful sentences for large lengths.

\cite{naturalAEs} works with sentences of length up to 30 words, whereas we deal with arbitrarily long sentences. In our evaluations, their method failed to produce meaningful texts of reasonable lengths, and thus would not be suitable for the tasks we consider (sentiment analysis, fake news detection, spam detection), which deal with quite long texts.

\section{Related Work in Adversarial Examples}
Previous works include \cite{jia2017adversarial}, where they study a specific algorithm to construct adversarial inputs to reading comprehension systems. 
\cite{jiwei} show that adversarial examples for text often produce ungrammatical sentences. \cite{genderObfuscate} considers the problem of fooling a linear classifier designed to predict the gender of an author of a text, while \cite{toxicComments} studies how to generate adversarial examples for a system that detects toxic comments in an online environment by introducing spaces, punctuations, or misspelling words. \cite{NMTnoise} shows that character-based neural machine translation systems are brittle to synthetic noise.
Adversarial attacks have practical implications for spam detection systems \citep{earlySPAM}, automated literature mining \cite{kuleshov2019machine}, automatic moderation of offensive language \cite{toxicComments}, summarization and conditional text generation \cite{NMTnoise}, and more. Interesting yet unexplored extensions of adversarial examples may be found in  generative \cite{kingma2014stochastic},  hybrid \cite{kuleshov2017deep}, or structured models \cite{kuleshov2015calibrated} in addition to purely discriminative ones.

\paragraph{Previous Work on Discrete-Input Adversarial Examples.}
\citet{papernot} described a gradient-based algorithm to generate adversarial inputs for an LSTM sentence classifier; however their method does not ensure similarity to original text, which may hinder sentence readability.
\citet{samanta} present a method based on specially hand-crafted rules for substituting words based on their part of speech, in order to fool a sentiment analysis algorithm;  our system learns valid substitutions for a variety of domains, and is validated across more models and tasks. 

%


\section{Sample Adversarial Examples}
We include multiple examples of adversarially perturbed inputs in this appendix, for each model and each task.

\section*{Examples for Sentiment Analysis}
\begin{figure}[h]
\begin{framed}

\small{\textbf{Classifier:} Naive Bayes. \textbf{Original label:} 90\% Negative. \textbf{New label:} 18\% Negative.}
\vspace{1mm}

\small{\textbf{Text:} i ordered a carne asada burrito and it was \sout{garbage} \substi{junk} ! the carne asada tasted bad , thin and hard , just bad quality . \sout{roberto} \substi{enrico} 's is not that great but it 's better than this place }

\end{framed}
\end{figure}
\vspace{0mm}
\begin{figure}[h]
\begin{framed}

\small{\textbf{Classifier:} LSTM. \textbf{Original label:} 97\% Negative. \textbf{New label:} 0\% Negative.}
\vspace{1mm}

\small{\textbf{Text:} \sout{this} \substi{that} \sout{place} \substi{location} is far from the the best pho experience i 've ever had ( that is almost a bad pun ) . it 's really not bad , but there are much better vietnamese restaurants in vegas . the pho broth is n't on the same level as pho so 1 or lemongrass cafe . for some reason , they were out of bean \sout{sprouts} \substi{sprout} and while i do n't love them , i 've become accustomed to having them in my pho . finally , i was a little disappointed that they do n't serve tripe in any of their pho variations . overall , although i did enjoy the soup , i probably wo n't return . i need to try the \sout{jenni} \substi{jenny} pho place just down the street . if that does n't work out , i 'll just have to make the extra drive to chinatown .}

\end{framed}
\end{figure}
\vspace{0mm}
\begin{figure}[h]
\begin{framed}

\small{\textbf{Classifier:} WordCNN. \textbf{Original label:} 81\% Positive. \textbf{New label:} 100\% Negative.}
\vspace{1mm}


\small{{\bf Text}: i \sout{went} \substi{moved} to wing wednesday which is all-you-can-eat wings for \$ oov even though they raise the prices it 's \sout{still} \substi{ever} really great deal . you can eat as many wings you want to get all the different \sout{flavors} \substi{tastes} and have a good time enjoying the atmosphere . the girls are smoking hot ! all the types of \sout{sauces} \substi{dressings} are awesome ! and i had at least 25 wings in one sitting . i would \sout{definitely} \substi{certainly} go again \sout{just} \substi{simply} not every \sout{wednesday} \substi{friday} maybe once a month .}

\end{framed}
\end{figure}
\vspace{0mm}
\begin{figure}[h]
\begin{framed}

\small{\textbf{Classifier:} VDCNN. \textbf{Original label:} 94\% Negative. \textbf{New label:} 8\% Negative.}
\vspace{1mm}

\small{\textbf{Text:} i \sout{wanted} \substi{want} to love you spicy pickle , but it was your \sout{pickle} \substi{dill} i \sout{liked} \substi{enjoyed} best . i had the bandito \sout{panini} \substi{gorgonzola} with peppered turkey , pepperjack , roasted red peppers , sundried tomatoes , chipotle mayo . i took the sandwich to go and by the time i got to eat , it was pretty soggy . the blend of flavors did n't save it . i 'll have to give it a second chance and dine in , maybe try a specialty sandwich \sout{instead} \substi{equally} . 
}

\end{framed}
\end{figure}
\vspace{0mm}
\begin{figure}[h]
\begin{framed}

\small{\textbf{Classifier:} LSTM. \textbf{Original label:} 97\% Negative. \textbf{New label:} 0\% Negative.}
\vspace{1mm}

\small{\textbf{Text:} long lines but \sout{amazing} \substi{surprising} \sout{burger} \substi{cheeseburger} and \sout{fries} \substi{chips} as \sout{always} \substi{consistently} . i always get the double double and the \sout{fries} \substi{nuggets} animal style . you can not go wrong with that . manager was super \sout{sweet} \substi{sugary} and \sout{nice} \substi{good}}

\end{framed}
\end{figure}
\vspace{0mm}
\begin{figure}[h]
\begin{framed}

\small{\textbf{Classifier:} Naive Bayes. \textbf{Original label:} 99\% Negative. \textbf{New label:} 0\% Negative.}
\vspace{1mm}

\small{\textbf{Text:}  must preface this review by saying that this is the only time i visited nicky 's thai kitchen ( or pittsburgh for that matter ) , so it could just be a fluke - maybe the chef had a bad day - but this is the second \sout{worst} \substi{largest} thai food i 've had anywhere in the world ... the \sout{worst} \substi{largest} was in aruba . i 'm a fan of thai food as you can tell - craving it even when on vacation in aruba - so i have a certain expectation when i saw the 4-star ratings for this restaurant . but was n't i \sout{disappointed} \substi{disappoint} ! we ordered drunken noodles and panang curry , both with beef , and spicy . typical thai dish right ? wrong . what we got are two \sout{tasteless} \substi{dorky} dishes . the drunken noodles dish is not just bland and way too mild , but both the veggie and meat tasted \sout{stale} \substi{old} . the panang curry was equally \sout{unimpressive} \substi{bland} . the color of the broth may be right , but there is only a hint of curry taste in it . the meat was \sout{chewy} \substi{succulent} to the point that i gave up on}

\end{framed}
\caption{Examples of adversarial text generated for Sentiment Analysis}
\label{adv-sample-appendix-sentimentanalysis}
\end{figure}

\clearpage
\section*{Examples for Fake News}
\begin{figure}[!h]
\begin{framed}

\small{\textbf{Classifier:} LSTM. \textbf{Original label:} 91\% Fake News. \textbf{New label:} 1\% Fake News.}
\vspace{1mm}

\small{\textbf{Text:} \sout{difference} \substi{discrepancy} between growing up in the 1960s compared to 2016 , '' you are here : home / us / \sout{difference} \substi{discrepancy} between growing up in the 1960s compared to 2016 difference between growing up in the 1960s compared to 2016 october 27 , 2016 pinterest seth  oov  reports that in august of this year , campus carry \sout{went} \substi{moved} into effect on texas ' \sout{public} \substi{demographic} college campuses , \sout{enabling} \substi{authorizing} students and staff \sout{with} \substi{among} valid concealed handgun licenses to legally carry their firearms . predictably , \sout{leftists} \substi{democrats} freaked out at the idea of people legally carrying firearms in their `` safe spaces . '' as we reported back in august , the most famous form of protest on texas college campuses was ``  oov  not  oov  , '' a movement where students who \sout{opposed} \substi{objected} campus carry \sout{took} \substi{picked} adult sex toys with them all across the campus . related : campus carry starts \sout{today} \substi{monday} in texas ; here 's how liberal students are protesting ... despite these  oov  , campus carry is in effect in texas , and there is not mass murder happening in  oov  , classrooms , or professors ' offices . who 'd have oov , right ? well ,}

\end{framed}
\end{figure}

\begin{figure}[h]
\begin{framed}
\small{\textbf{Classifier:} Naive Bayes. \textbf{Original label:} 96\% Fake News. \textbf{New label:} 0\% Fake News.}
\vspace{1mm}

\small{{\bf Text:} israel votes : netanyahu 's last-ditch vow to his \sout{base} \substi{foundation} - a dead peace process ( +video ) , '' politicians make many \sout{campaign} movements promises they do n't intend to \sout{deliver} \substi{render} on . but netanyahu 's promise \sout{monday} \substi{thu} to never \sout{agree} \substi{subscribe} to a palestinian state fits his record . israeli prime minister benjamin netanyahu talks as he visits a construction site in oov oov , east jerusalem , \sout{monday} \substi{thu} march 16 , 2015 , a day ahead of legislative elections . netanyahu is seeking his \sout{fourth} \substi{iii} term as prime minister . \sout{with} \substi{via} israel 's final pre-election polls \sout{pointing} \substi{portraying} to a difficult road for prime minister benjamin netanyahu to stay in power , he spent his final days on the campaign trail throwing red meat to his \sout{base} \substi{foundation} . oov oov warned israeli voters that only mr. netanyahu has the strength to stand up to `` '' hussein obama . '' '' \sout{campaign} \substi{movements} \sout{ads} \substi{advertisement} compared israeli oov workers and regulators to hamas militants and called his opponents tools of shadowy foreign financiers ( a strange charge given his own close ties to us \sout{casino} \substi{poker} billionaire sheldon adelson ) . but on monday the prime minister delivered his show oov : vote}

\end{framed}
\end{figure}

\begin{figure}[h]
\begin{framed}

\small{\textbf{Classifier:} WordCNN. \textbf{Original label:} 91\% Fake News. \textbf{New label:} 1\% Fake News.}
\vspace{1mm}

\small{\textbf{Task:}`` we must smash the clinton machine : democratic elites and the media sold out to hillary this time , but change is coming '' , '' a times story headlined `` \sout{obama} \substi{gingrich} \sout{privately} \substi{stealthily} \sout{tells} \substi{narrates} \sout{donors} \substi{contributors} time is coming to unite behind hillary '' had \sout{obama} \substi{gingrich} telling dnc high   oov    to `` come together . '' in it obama `` did n't explicitly call on sanders to quit '' but a `` white house official '' confirmed his `` unusually candid '' words . it was a plant dressed up as a scoop . obama spoke not privately but on background , and not to his \sout{donors} \substi{contributors} but \sout{through} \substi{via} them ( and the paper ) to his base . it was a different portrait of obama as   oov    : political , financial and media \sout{elites} \substi{oligarchs} , all working as one to put down a revolt . \sout{obama} \substi{gingrich} 's neutrality is a polite scam . his `` private '' chat \sout{came} \substi{entered} before voters in 29 states even had their say . presidents never let appointees make endorsements , but three obama cabinet secretaries -- \sout{agriculture} \substi{husbandry} 's tom vilsack ,   oov    's julian castro and labor 's thomas perez -- backed clinton}

\end{framed}
\caption{Examples of adversarial text generated for Fake News Detection}
\label{adv-sample-appendix-fakenews}
\end{figure}

\clearpage

\section*{Examples for Spam Classification}
\begin{figure}[h]
\begin{framed}

\small{\textbf{Classifier:} Naive Bayes. \textbf{Original label:} 99\% Spam. \textbf{New label:} 0\% Spam.}
\vspace{1mm}

\small{\textbf{Text:} wondercum is a wonderful combination of fine \sout{herbs} \substi{weed} extract that are well known for centuries we do not have any branched or \sout{stores} \substi{storing} located \sout{anywhere} \substi{whenever} . http : oov }

\end{framed}
\end{figure}

\begin{figure}[h]
\begin{framed}

\small{\textbf{Classifier:} LSTM. \textbf{Original label:} 73\% Spam. \textbf{New label:} 0\% Spam.}
\vspace{1mm}

\small{\textbf{Text:} \sout{view} \substi{viewpoint} \sout{pics} \substi{images} of \sout{christian} \substi{protestant} singles in your \sout{area} \substi{realm} \sout{meet} \substi{cater} \sout{christian} \substi{protestant} singles with oov values in your \sout{area} \substi{realm} . oov this \sout{email} \substi{mailroom} is a commercial \sout{advertisement} \substi{publicity} \sout{sent} \substi{forwarded} in compliance with the oov act of 2003. we have no \sout{desire} \substi{volition} to send you information that is not wanted , \sout{therefore} \substi{similarly} , if you wish to be excluded from future mailings , please use the link at the bottom of the page  }

\end{framed}
\end{figure}

\begin{figure}[h]
\begin{framed}

\small{\textbf{Task:} Spam Classification. \textbf{Classifier:} WordCNN. \textbf{Original label:} 89\% Spam. \textbf{New label:} 0\% Spam.}
\vspace{1mm}

\small{\textbf{Text:} your \sout{loan} \substi{borrower} \sout{application} \substi{apps} is \sout{waiting} \substi{hoping} \sout{dear} \substi{pricey} \sout{homeowner} \substi{landowner} are you still paying too much for your current \sout{mortgage} \substi{subprime} ? refinaance us best \sout{rate} \substi{cadence} . your \sout{approval} \substi{ratification} is \sout{waiting} \substi{expecting} . please \sout{respond} \substi{cater} oov \sout{http} \substi{myspace} : oov \sout{helen} \substi{edith} gay lendingtree \sout{department} \substi{administration}  }

\end{framed}
\end{figure}

\begin{figure}[h]
\begin{framed}

\small{\textbf{Task:} Spam Classification. \textbf{Classifier:} NB. \textbf{Original label:} 98\% Spam. \textbf{New label:} 0\% Spam.}
\vspace{1mm}

\small{\textbf{Text:} urgent : your paypal account has expired ! paypal body , td protect your account <oov> sure you never provide your password to \sout{fraudulent} \substi{bogus} websites . for more information on protecting yourself from fraud , please review our security tips at https : <oov> your <oov> should never give your paypal password to anyone , including paypal \sout{employees} \substi{gov} . upgrade your information dear \sout{member} \substi{lawmakers} , it has come to our attention that your paypal \sout{billing} \substi{legislation} information is out of date . therefore we have had to put a limit your paypal account . we require you to update your \sout{billing} \substi{legislation} information on or before 4th june 2007. failure to update your records may result in a suspension of your account . to update your paypal \sout{billing} \substi{invoices} information click the link below , login to your account with your email address and password and read the on screen instructions : http : //www.paypal.com/cgi-bin/webscr ? <oov> this security measure helps us continue to offer paypal as a secure and cost-effective payment service . we appreciate your cooperation and assistance . sincerely , the paypal team please do not reply to this email . this mailbox is not \sout{monitored} \substi{oversight} and you will not  }

\end{framed}
\end{figure}

\begin{figure}[h]
\begin{framed}

\small{\textbf{Task:} Spam Classification. \textbf{Classifier:} WordCNN. \textbf{Original label:} 98\% Spam. \textbf{New label:} 68\% Spam.}
\vspace{1mm}

\small{\textbf{Text:} this job offer is just for you ! \sout{dear} \substi{pricey} \sout{sirs} \substi{gentlemen} , \sout{aegis} \substi{sponsorship} capital group llc ( aegis  ) is a \sout{specialty} \substi{expert} \sout{investment} \substi{capital} firm managing private \sout{equity} \substi{fairness} and \sout{venture} \substi{enterprise} capital funds \sout{with} \substi{into} a \sout{national} \substi{nationalist} focus on small businesses and the social \sout{benefits} \substi{advantages} of supporting \sout{entrepreneurs} \substi{corporations} and \sout{enhancing} \substi{reinforcing} local job \sout{creation} \substi{introduction} . we \sout{would} \substi{should} like to stress , that our \sout{company} \substi{enterprise} pays \sout{special} \substi{peculiar} \sout{attention} \substi{concentration} to customer support of \sout{private} \substi{particular} \sout{customers} \substi{subscribers} , though we also have the corresponding business plans for the bigger companies as \sout{well} \substi{correctly} . a \sout{more} \substi{wider} detailed information about our \sout{company} \substi{enterprise} you may obtain at our \sout{official} \substi{formal} \sout{website} \substi{venue} . due to the necessity for \sout{expansion} \substi{enlargement} of our \sout{company} \substi{enterprise} , we have announced some additional openings for new \sout{employees} \substi{officials} . we are \sout{glad} \substi{contented} to \sout{offer} \substi{supply} you one of the vacant positions in our \sout{company} \substi{business}  team  a \sout{position} \substi{stance} of the ; \sout{account} \substi{accountant} \sout{manager} \substi{admin}  .you will have the responsibility for the following \sout{duties} \substi{obligations} : \sout{fulfillment} \substi{implementation} of \sout{orders} \substi{commandments} given by the \sout{company} \substi{enterprise} , operations with the \sout{bank} \substi{banco} \sout{transfers} \substi{assignments} ( direct \sout{deposits} \substi{filings} and \sout{wires} \substi{threads} ) \sout{from} \substi{into} \sout{customers} \substi{subscribers} , implementation of \sout{calculations} \substi{computations}  }

\end{framed}
\caption{Examples of adversarial text generated for Spam Classification}
\label{adv-sample-appendix-spamclassification}
\end{figure}

\end{document}